\newtheorem{theorem}{Theorem}
\newtheorem{corollary}{Corollary}
\newtheorem{proposition}{Proposition}
\newtheorem{lemma}{Lemma}
\newtheorem{remark}{Remark}
\newtheorem{definition}{Definition}
\author[1]{Emanuele Zappala}
\author[2]{Daniel Levine}
\author[3]{Sizhuang He}
\author[4]{Syed Rizvi}
\author[5]{Sacha Levy}
\author[6]{David van Dijk}
\affil[1]{Idaho State University, emanuelezappala@isu.edu}
\affil[2]{Yale University, daniel.levine@yale.edu}
\affil[3]{University of Michigan, Ann Arbor, sizhuang@umich.edu}
\affil[4]{Yale University, syed.rizvi@yale.edu}
\affil[5]{Yale University, sacha.levy@yale.edu}
\affil[6]{Yale University, david.vandijk@yale.edu}
\title{Operator Learning Meets Numerical Analysis: Improving Neural Networks through Iterative Methods}
\begin{document}

\maketitle

\begin{abstract}
Deep neural networks, despite their success in numerous applications, often function without established theoretical foundations. In this paper, we bridge this gap by drawing parallels between deep learning and classical numerical analysis. By framing neural networks as operators with fixed points representing desired solutions, we develop a theoretical framework grounded in iterative methods for operator equations. Under defined conditions, we present convergence proofs based on fixed point theory. We demonstrate that popular architectures, such as diffusion models and AlphaFold, inherently employ iterative operator learning. Empirical assessments highlight that performing iterations through network operators improves performance. We also introduce an iterative graph neural network, PIGN, that further demonstrates benefits of iterations. Our work aims to enhance the understanding of deep learning by merging insights from numerical analysis, potentially guiding the design of future networks with clearer theoretical underpinnings and improved performance.
\end{abstract}

\section{Introduction}\label{sec:Intro}

Deep neural networks have become essential tools in domains such as computer vision, natural language processing, and physical system simulations, consistently delivering impressive empirical results. However, a deeper theoretical understanding of these networks remains an open challenge. This study seeks to bridge this gap by examining the connections between deep learning and classical numerical analysis.

By interpreting neural networks as operators that transform input functions to output functions, discretized on some grid, we establish parallels with numerical methods designed for operator equations. This approach facilitates a new iterative learning framework for neural networks, inspired by established techniques like the Picard iteration. 

Our findings indicate that certain prominent architectures, including diffusion models, AlphaFold, and Graph Neural Networks (GNNs), inherently utilize iterative operator learning (see Figure~\ref{fig:iterative_methods_framework}). Empirical evaluations show that adopting a more explicit iterative approach in these models can enhance performance. Building on this, we introduce the Picard Iterative Graph Neural Network (PIGN), an iterative GNN model, demonstrating its effectiveness in node classification tasks.

In summary, our work:\
\begin{itemize}
    \item Explores the relationship between deep learning and numerical analysis from an operator perspective.
    \item Introduces an iterative learning framework for neural networks, supported by theoretical convergence proofs.
    \item Evaluates the advantages of explicit iterations in widely-used models.
    \item Presents PIGN and its performance metrics in relevant tasks.
\item Provides insights that may inform the design of future neural networks with a stronger theoretical foundation.
\end{itemize}

The remainder of this manuscript is organized as follows: We begin by delving into the background and related work to provide the foundational understanding for our contributions. This is followed by an introduction to our theoretical framework for neural operator learning. Subsequently, we delve into a theoretical exploration of how various prominent deep learning frameworks undertake operator learning. We conclude with empirical results underscoring the advantages of our proposed framework.

\begin{figure}
    \centering
    \includegraphics[width=0.95\linewidth]{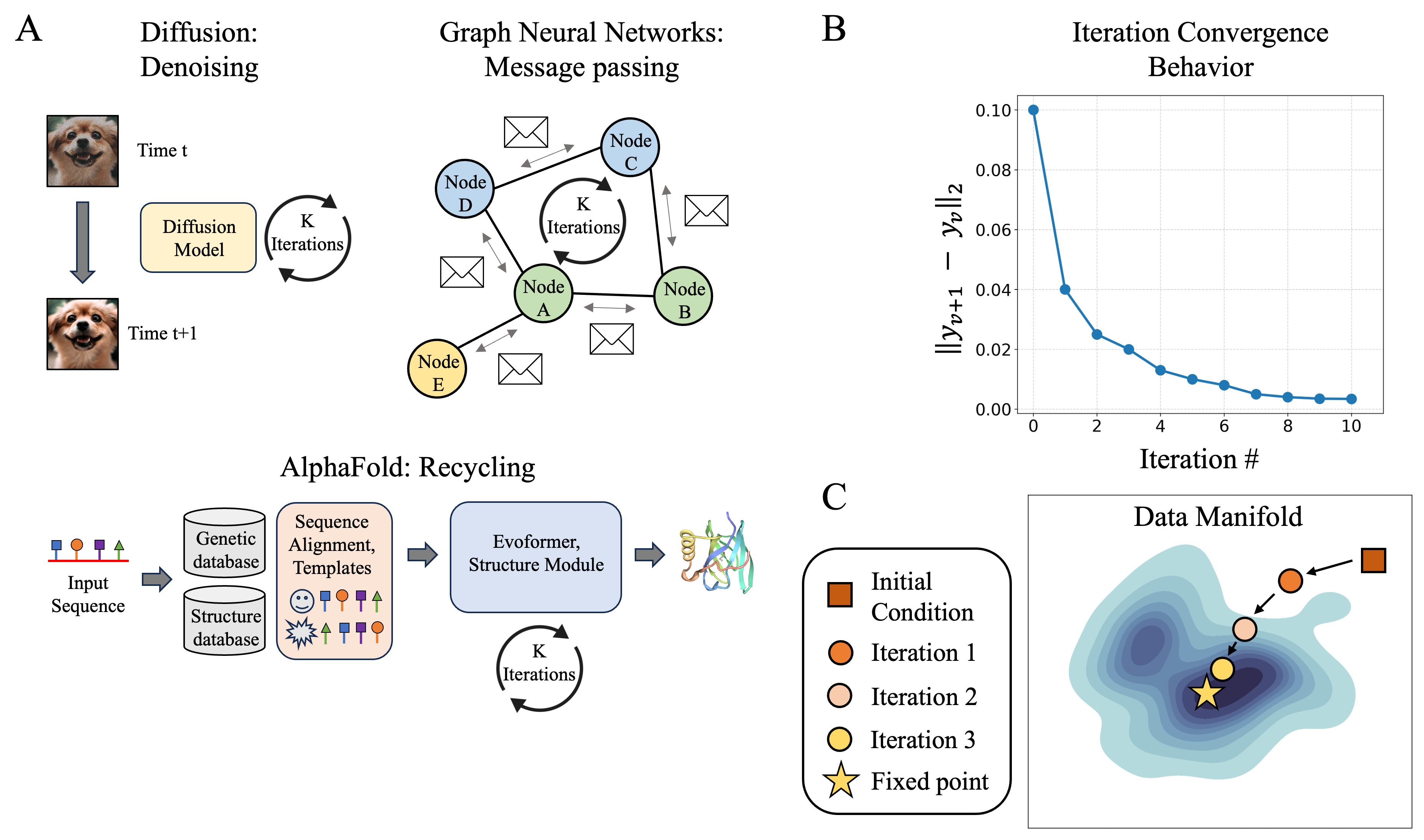}
    \caption{Overview of iterative framework. (A) Popular architectures which incorporate iterative components in their framework. (B) Convergence behavior of an iterative solver. (C) Behavior of iterative solver converging to a fixed point in the data manifold.}
    \label{fig:iterative_methods_framework}
\end{figure}

\section{Background and Related Work}\label{sec:Background}

\paragraph{Numerical Analysis.}
Numerical analysis is rich with algorithms designed for approximating solutions to mathematical problems. Among these, the Banach-Caccioppoli theorem is notable, used for iteratively solving operator equations in Banach spaces. The iterations, often called Fixed Point iterations, or Picard iterations, allow to solve an operator equation approximately, in an iterative manner. Given an operator $T$, this approach seeks a function $u$ such that $T(u) = u$, called a fixed point, starting with an initial guess and refining it iteratively.

The use of iterative methods has a long history in numerical analysis for approximate solutions of intractable equations, for instance involving nonlinear operators. For example, integral equations, e.g. of Urysohn and Hammerstein type, arise frequently in physics and engineering applications and their study has long been treated as a fixed point problem \cite{krasnosel1964topological,AtHa,atkinson1987projection}.

Convergence to fixed points can be guaranteed under contractivity assumptions by the Banach-Caccioppoli fixed point theorem \cite{AtHa}. Iterative solvers have also been crucial for partial differential equations and many other operator equations \cite{kelley1995iterative}.

\paragraph{Operator Learning.}

Operator learning is a class of deep learning methods where the objective of optimization is to learn an operator between function spaces. Examples and an extended literature can be found in \cite{kovachki2021neural,lu2021learning}. The interest of such an approach, is that mapping functions to functions we can model dynamics  datasets, and leverage the theory of operators. When the operator learned is defined through an equation, e.g. an integral equation as in \cite{ANIE}, along with the training procedure we also need a way of solving said equation, i.e. we need a solver. For highly nonlinear problems, when deep learning is not involved, these solvers often utilize some iterative procedure as in \cite{kelley1995iterative}. Our approach here brings the generality of iterative approaches into deep learning by allowing to learn operators between function spaces through iterative procedure used in solving nonlinear operator equations. 

\paragraph{Transformers.}
Transformers (\cite{vaswani2017attention,devlin2019bert,radford2018improvinglu}), originally proposed for natural language processing tasks, have recently achieved state-of-the-art results in a variety of computer vision applications (\cite{dosovitskiy2020image,chen2020simple,he2022mae,baevski2022data2vec,assran2023ijepa}). Their self-attention mechanisms make them well-suited for tasks beyond just sequence modeling. Notably, transformers have been applied in an iterative manner in some contexts, such as the ``recycling'' technique used in AlphaFold2 \cite{jumper2021highly}.


\paragraph{AlphaFold.} 
DeepMind's AlphaFold \cite{senior2020improved} is a protein structure prediction model, which was significantly improved in \cite{jumper2021highly} with the introduction of AlphaFold2 and further extended to protein complex modeling in AlphaFold-Multimer \cite{evans2021proteincomplex}. AlphaFold2 employs an iterative refinement technique called "recycling", which recycles the predicted structure through its entire network. The number of iterations was increased from 3 to 20 in AF2Complex \cite{gao2022af2complex}, where improvement was observed. An analysis of DockQ scores with increased iterations can be found in \cite{johansson2022improving}. We only look at monomer targets, where DockQ scores do not apply and focus on global distance test (GDT) scores and root-mean-square deviation (RMSD).

\paragraph{Diffusion Models.}
Diffusion models were first introduced in \cite{sohl-dickstein2015deep} and were shown to have strong generative capabilities in \cite{song2019generative} and \cite{ho2020denoising}. They are motivated by diffusion processes in non-equilibrium thermodynamics \cite{jarzynski1997equilibrium} related to Langevin dynamics and the corresponding Kolmogorov forward and backward equations. Their connection to stochastic differential equations and numerical solvers is highlighted in \cite{song2021scorebased,kingma2021variationaldiff,dockhorn2022score,meng2022sdedit}. We focus on the performance of diffusion models at different amounts of timesteps used during training, including an analysis of FID \cite{heusel2017ganstrained} scores.

\paragraph{Graph Neural Networks (GNNs).}
GNNs are designed to process graph-structured data through iterative mechanisms. Through a process called message passing, they repeatedly aggregate and update node information, refining their representations. The iterative nature of GNNs was explored in \cite{tang2020towards}, where the method combined repeated applications of the same GNN layer using confidence scores. Although this shares similarities with iterative techniques, our method distinctly leverages fixed-point theory, offering specific guarantees and enhanced performance, as detailed in Section~\ref{subsec:PIGN}.

\section{Iterative Methods for Solving Operator Equations}\label{sec:operator_equations}

In the realm of deep learning and neural network models, direct solutions to operator equations often become computationally intractable. This section offers a perspective that is applicable to machine learning, emphasizing the promise of iterative methods for addressing such challenges in operator learning. We particularly focus on how the iterative numerical methods converge and their application to neural network operator learning. These results will be used in the Appendix to derive theoretical convergence guarantees for iterations on GNNs and Transformer architectures, see Appendix~\ref{sec:theoretical}. 

\subsection{Setting and Problem Statement}
Consider a Banach space $X$. Let $T: X \longrightarrow X$ be a continuous operator. Our goal is to find solutions to the following equation:
\begin{equation}\label{eqn:operator_eqn}
\lambda T(x) + f = x,
\end{equation}
where $f \in X$ and $\lambda \in \mathbb{R}-\{0\}$ is a nontrivial scalar. A solution to this equation is a fixed point $x^*$ for the operator $P = \lambda T + f$:
\begin{equation}\label{eqn:Fixed_Point}
\lambda T(x^*) + f = x^*.
\end{equation}

\subsection{Iterative Techniques}\label{sec:iterative techniques}
It is clear that for arbitrary nonlinear operators, solving Equation~\eqref{eqn:operator_eqn} is not feasible. Iterative techniques such as Picard or Newton-Kantorovich iterations become pivotal. These iterations utilize a function $g$ and progress as:
\begin{equation}
x_{n+1} = g(T, x_n).
\end{equation}

Central to our discussion is the interplay between iterative techniques and neural network operator learning. We highlight the major contribution of this work: By using network operators iteratively during training, convergence to network fixed points can be ensured. This approach uniquely relates deep learning with classical numerical techniques.

\subsection{Convergence of Iterations and Their Application}
A particular case of great interest is when the operator $T$ takes an integral form and $X$ represents a function space, our framework captures the essence of an integral equation (IE). By introducing $P_\lambda (x) = \lambda T(x) + f$, we can rephrase our problem as a search for fixed points.

We now consider the problem of approximating a fixed point of a nonlinear operator. The results of this section are applied to various deep learning settings in Appendix~\ref{sec:theoretical} to obtain theoretical guarantees for the iterative approaches. 

\begin{theorem}\label{thm:iteration_method}
    Let $\epsilon>0$ be fixed, and suppose that
    $T$ is Lipschitz with constant $k$. Then, for all $\lambda$ such that $|\lambda k|<1$, we can find $y\in X$ such that $||\lambda T(y) + f - y|| < \epsilon$ for any choice of $\lambda$, independently of the choice of $f$. 
\end{theorem}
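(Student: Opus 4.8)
The plan is to recognize Equation~\eqref{eqn:operator_eqn} as a fixed point equation for the operator $P_\lambda(x) := \lambda T(x) + f$, and to observe that the hypothesis $|\lambda k| < 1$ is exactly what makes $P_\lambda$ a contraction, so that the Banach-Caccioppoli theorem applies. Concretely, for any $x, x' \in X$,
\[
\| P_\lambda(x) - P_\lambda(x') \| = |\lambda|\, \| T(x) - T(x') \| \le |\lambda| k \, \| x - x' \| = |\lambda k|\, \| x - x' \|,
\]
so $P_\lambda$ is a strict contraction whose constant $|\lambda k|$ depends only on $\lambda$ and $k$, not on $f$ — which is the sense in which the conclusion is "independent of the choice of $f$".

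From here there are two routes. The quick one is to invoke the Banach-Caccioppoli fixed point theorem directly to obtain an \emph{exact} fixed point $x^\ast = P_\lambda(x^\ast)$, for which $\| \lambda T(x^\ast) + f - x^\ast \| = 0 < \epsilon$ trivially. The more informative route, which also foreshadows the iterative schemes analyzed later in Appendix~\ref{sec:theoretical}, is to construct $y$ explicitly by Picard iteration: fix any $x_0 \in X$, set $x_{n+1} = P_\lambda(x_n) = \lambda T(x_n) + f$, and bound consecutive differences by $\| x_{n+1} - x_n \| \le |\lambda k|\, \| x_n - x_{n-1} \| \le \dots \le |\lambda k|^n \| x_1 - x_0 \|$.

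Since $|\lambda k| < 1$, the series $\sum_n \| x_{n+1} - x_n \|$ is dominated by a convergent geometric series, so $(x_n)$ is Cauchy; completeness of the Banach space $X$ furnishes a limit $x^\ast$, and continuity of $T$ (hence of $P_\lambda$) gives $x^\ast = \lambda T(x^\ast) + f$. For the stated $\epsilon$-approximate version one need not even pass to the limit: given $\epsilon > 0$, choose $n$ so large that $|\lambda k|^n \| x_1 - x_0 \| < \epsilon$ and put $y := x_n$; then $\| \lambda T(y) + f - y \| = \| x_{n+1} - x_n \| \le |\lambda k|^n \| x_1 - x_0 \| < \epsilon$.

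The argument is essentially routine once the contraction is identified; the only places needing a little care are that completeness of $X$ is what legitimizes passing to the limit (and that, in the $\epsilon$-version, the geometric decay of consecutive iterates is what makes the required number of iterations finite), and the remark that while the contraction constant $|\lambda k|$ — and hence the qualitative statement — does not depend on $f$, the number of iterations needed to reach a prescribed tolerance from a given $x_0$ will in general depend on $\| x_1 - x_0 \| = \| \lambda T(x_0) + f - x_0 \|$. I do not anticipate any substantive obstacle beyond this bookkeeping of constants.
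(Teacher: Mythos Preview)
Your proposal is correct and follows essentially the same Picard-iteration approach as the paper: define $x_{n+1} = \lambda T(x_n) + f$, bound $\|x_{n+1}-x_n\|$ geometrically using the Lipschitz estimate, and take $y = x_n$ for $n$ large enough. The paper makes the specific choice $x_0 = f$ (so that $\|x_1 - x_0\| = |\lambda|\,\|T(f)\|$) and defers the exact-solution statement to Corollary~\ref{cor:existence} and the uniformity-in-$f$ discussion to Corollary~\ref{cor:uniform}, both of which you have already anticipated in your closing remarks.
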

\begin{proof}
     Let us set $y_0 := f$ and $y_{n+1} = f + \lambda T(y_n)$ and consider the term
     $||y_1 - y_0||$. We have
     $$
     ||y_1-y_0|| = ||\lambda T(y_0)|| = |\lambda| ||T(y_0)||.
     $$
     For an arbitrary $n>1$ we have
     $$
     ||y_{n+1} - y_n|| = ||\lambda T(y_n) - \lambda T(y_{n-1})|| \leq k|\lambda| ||y_n - y_{n-1}||.
     $$
     Therefore, applying the same procedure to $y_n-y_{n-1} = T(y_{n-1}) - T(y_{n-2})$ until we reach $y_1 - y_0$, we obtain the inequality
     $$
     ||y_{n+1} - y_n|| \leq |\lambda|^n k^n ||T(y_0)||. 
     $$
     Since $|\lambda|k <1$, the term $|\lambda|^n k^n ||T(y_0)||$ is eventually smaller than $\epsilon$, for all $n \geq \nu$ for some choice of $\nu$. Defining $y:= y_\nu$ for such $\nu$ gives the following
     $$
     ||\lambda T(y_\nu) + f - y_{\nu} || = ||y_{\nu+1} - y_\nu|| < \epsilon. 
     $$
\end{proof}

The following now follows easily.
\begin{corollary}\label{cor:existence}
    Consider the same hypotheses as above. Then Equation~\ref{eqn:operator_eqn} admits a solution for any choice of $\lambda$ such that $|\lambda|k <1$.
\end{corollary}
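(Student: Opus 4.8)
The plan is to observe that Corollary~\ref{cor:existence} is the exact-solution strengthening of Theorem~\ref{thm:iteration_method}: the theorem produces, for every $\epsilon$, an approximate fixed point, and we now want a genuine fixed point. The natural route is to reuse the very sequence $y_0 := f$, $y_{n+1} = f + \lambda T(y_n)$ from the proof of the theorem, and show it is Cauchy, hence convergent in the Banach space $X$, with limit a solution of Equation~\ref{eqn:operator_eqn}.

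First I would recall from the proof of Theorem~\ref{thm:iteration_method} the key bound $\|y_{n+1} - y_n\| \leq |\lambda|^n k^n \|T(y_0)\|$. Writing $q := |\lambda| k < 1$, I would estimate, for $m > n$, the telescoping sum $\|y_m - y_n\| \leq \sum_{j=n}^{m-1} \|y_{j+1} - y_j\| \leq \|T(y_0)\| \sum_{j=n}^{\infty} q^j = \|T(y_0)\| \, q^n/(1-q)$, which tends to $0$ as $n \to \infty$. Hence $(y_n)$ is Cauchy; since $X$ is a Banach space, it converges to some $x^* \in X$. Then I would pass to the limit in the recursion $y_{n+1} = \lambda T(y_n) + f$: continuity of $T$ (assumed throughout) gives $T(y_n) \to T(x^*)$, while $y_{n+1} \to x^*$ and $y_n \to x^*$, so $x^* = \lambda T(x^*) + f$, i.e. $x^*$ solves Equation~\ref{eqn:operator_eqn}. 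This argument works for any $\lambda$ with $|\lambda| k < 1$, as required.

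There is no serious obstacle here — this is essentially the Banach fixed-point theorem applied to the contraction $P_\lambda = \lambda T + f$, which has Lipschitz constant $|\lambda| k < 1$. The only point that needs a word of care is that $T$ (and hence $P_\lambda$) is merely assumed continuous in the statement, so one cannot invoke contractivity of $T$ itself; but the bound on $\|y_{n+1}-y_n\|$ derived in the theorem already only used the Lipschitz constant $k$, so the geometric decay of consecutive differences is in hand regardless, and continuity of $T$ is all that is needed to identify the limit as a fixed point. Alternatively, and even more briefly, one can simply quote the Banach–Caccioppoli theorem (referenced in the Background section) for the contraction $P_\lambda$, but I would prefer the self-contained Cauchy-sequence argument since it reuses machinery already built in the proof above.
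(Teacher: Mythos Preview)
Your proposal is correct and follows essentially the same approach as the paper's own proof: show that the iteration sequence $(y_n)$ from Theorem~\ref{thm:iteration_method} is Cauchy (via the geometric bound on consecutive differences), use completeness of $X$ to obtain a limit, and invoke continuity of $T$ to identify that limit as a solution of Equation~\ref{eqn:operator_eqn}. The paper's proof is just a more compressed version of exactly this argument, stated in three sentences without writing out the telescoping estimate explicitly.
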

\begin{proof}
    From the proof of Theorem~\ref{thm:iteration_method} it follows that the sequence $y_n$ is a Cauchy sequence. Since $X$ is Banach, then $y_n$ converges to $y\in X$. By continuity of $T$, $y$ is a solution to Equation~\ref{eqn:operator_eqn}. 
\end{proof}

Recall that for nonlinear operators, continuity and boundedness are not equivalent conditions.

\begin{corollary}\label{cor:uniform} 
       If in the same situation above $T$ is also bounded, then the choice of $\nu$ of the iteration can be chosen uniformly with respect to $f$, for a fixed choice of $\lambda$.
\end{corollary}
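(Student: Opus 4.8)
The plan is to revisit the estimate from the proof of Theorem~\ref{thm:iteration_method} and replace the $f$-dependent quantity $\|T(y_0)\| = \|T(f)\|$ with an absolute bound. Recall that there we obtained $\|y_{n+1} - y_n\| \leq |\lambda|^n k^n \|T(y_0)\|$. The obstacle is that $y_0 = f$, so the factor $\|T(y_0)\|$ depends on $f$, and a priori it could be arbitrarily large as $f$ ranges over $X$; this is exactly why Theorem~\ref{thm:iteration_method} only gives $\nu$ depending on $f$. Boundedness of $T$ is precisely the hypothesis that removes this difficulty.

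First I would invoke the hypothesis that $T$ is bounded: there is a constant $M$ such that $\|T(x)\| \leq M$ for all $x \in X$ (in the nonlinear-operator sense of boundedness, i.e. $T$ maps $X$ into a bounded set, which is the relevant notion here as the excerpt emphasizes that for nonlinear operators continuity and boundedness are distinct). In particular $\|T(y_0)\| = \|T(f)\| \leq M$ regardless of $f$. Substituting into the telescoping estimate gives
\begin{equation}\label{eqn:uniform_estimate}
\|y_{n+1} - y_n\| \leq |\lambda|^n k^n M,
\end{equation}
with the right-hand side now entirely independent of $f$ (for the fixed $\lambda$).

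Next, since $|\lambda k| < 1$, I would choose $\nu$ to be any integer large enough that $|\lambda|^\nu k^\nu M < \epsilon$; explicitly one may take $\nu > \log(\epsilon/M)/\log(|\lambda|k)$ when $M > 0$ (and $\nu = 1$ suffices trivially if $M = 0$). This $\nu$ depends only on $\epsilon$, $\lambda$, $k$, and $M$, but not on $f$. Then, exactly as in the proof of Theorem~\ref{thm:iteration_method}, setting $y := y_\nu$ (which does depend on $f$ through the iteration, but the \emph{number of iterations} $\nu$ does not) yields
\begin{equation}
\|\lambda T(y_\nu) + f - y_\nu\| = \|y_{\nu+1} - y_\nu\| \leq |\lambda|^\nu k^\nu M < \epsilon,
\end{equation}
which is the desired uniform statement. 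I do not expect any genuine difficulty beyond making sure the notion of "bounded" is the one that gives a single constant $M$ valid on all of $X$; once that is in hand the argument is a one-line modification of the earlier proof, with the key point being that the geometric decay rate $|\lambda|k$ and the bound $M$ are the only data entering the choice of $\nu$.
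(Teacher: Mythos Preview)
Your proposal is correct and follows essentially the same approach as the paper: recall the estimate $\|y_{n+1}-y_n\|\le |\lambda|^n k^n \|T(f)\|$ from Theorem~\ref{thm:iteration_method}, bound $\|T(f)\|$ by $M$ using the boundedness hypothesis, and then choose $\nu$ so that $|\lambda|^\nu k^\nu M<\epsilon$. The paper's proof is nearly identical, just slightly terser (it does not spell out the explicit logarithmic bound for $\nu$).
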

\begin{proof}
    From the proof of Theorem~\ref{thm:iteration_method}, we have that  
    $$
     ||y_{n+1} - y_n|| \leq |\lambda|^n k^n ||T(y_0)|| = |\lambda|^n k^n ||T(f)||.
    $$
    If $T$ is bounded by $M$, then the previous inequality is independent of the element $f\in X$. Let us choose $\nu$ such that $|\lambda|^n k^n < \epsilon/M$. Then, suppose $f$ is an arbitrary element of $X$. Initializing $y_0 = f$, $y_\nu$ will satisfy $||\lambda T(y_\nu) + f - y_{\nu} ||<\epsilon$, for any given choice of $\epsilon$.  
\end{proof}

The following result is classic, and its proof can be found in several sources. See for instance Chapter 5 in \cite{AtHa}.

\begin{theorem}{(Banach-Caccioppoli fixed point theorem)}\label{thm:Banach_Caccioppoli}
Let $X$ be a Banach space, and let $T: X \longrightarrow X$ be contractive mapping with contractivity constant $0<k<1$. Then, $T$ has a unique fixed point, i.e. the equation $T(x) = x$ has a unique solution $u$ in $X$. Moreover, for any choice of $u_0$, $u_n = T^n(u_0)$ converges to the solution with rate of convergence
\begin{eqnarray}
        ||u_n - u|| &<& \frac{k^n}{1-k}||u_0 - u_1||,\label{eqn:BC_rate1} \\
        ||u_n - u|| &<& \frac{k}{1-k}||u_{n-1} - u_n||. \label{eqn:BC_rate2}
\end{eqnarray}
\end{theorem}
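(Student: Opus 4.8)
The plan is to run the Picard iteration $u_{n+1} = T(u_n)$ from an arbitrary starting point $u_0$ and show the resulting sequence is Cauchy, mirroring the proof of Theorem~\ref{thm:iteration_method} but now exploiting the contractivity $0<k<1$ directly instead of the auxiliary factor $|\lambda|$. First I would establish the one-step estimate $||u_{n+1}-u_n|| = ||T(u_n)-T(u_{n-1})|| \le k\,||u_n - u_{n-1}||$ and iterate it down to $||u_{n+1}-u_n|| \le k^n\,||u_1 - u_0||$.

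Next, for $m>n$ I would telescope $u_m - u_n = \sum_{j=n}^{m-1}(u_{j+1}-u_j)$, apply the triangle inequality, and bound the result by the geometric series $\sum_{j\ge n} k^j = k^n/(1-k)$, obtaining $||u_m - u_n|| \le \tfrac{k^n}{1-k}\,||u_1 - u_0||$. Since $k<1$, the right-hand side tends to $0$, so $(u_n)$ is Cauchy; completeness of $X$ then produces a limit $u\in X$. Because a Lipschitz map is continuous, passing to the limit in $u_{n+1} = T(u_n)$ yields $u = T(u)$, so $u$ is a fixed point. For uniqueness, if $u$ and $v$ are both fixed points then $||u-v|| = ||T(u)-T(v)|| \le k\,||u-v||$, and $k<1$ forces $||u-v||=0$, i.e. $u=v$.

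Finally, the two convergence rates follow by letting $m\to\infty$ in the Cauchy estimates. The bound $||u_m - u_n|| \le \tfrac{k^n}{1-k}\,||u_1-u_0||$ gives \eqref{eqn:BC_rate1} in the limit, while re-deriving the telescoping estimate with the geometric factor anchored at the single step $u_{n-1}\to u_n$ (so $||u_{n+i}-u_{n+i-1}|| \le k^i\,||u_n - u_{n-1}||$) and summing gives $||u_m - u_n|| \le \tfrac{k}{1-k}\,||u_n - u_{n-1}||$, which in the limit is \eqref{eqn:BC_rate2}.

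There is no genuine obstacle here — this is the classical Banach--Caccioppoli argument — so the only points requiring care are bookkeeping ones: summing the geometric series with the correct initial index in each of the two rate estimates, and invoking the \emph{continuity} of $T$ (rather than merely boundedness, which as noted above is inequivalent for nonlinear operators) when interchanging the limit with the operator.
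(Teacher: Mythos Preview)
Your proof is the standard Banach--Caccioppoli argument and is correct. Note, however, that the paper does not actually supply its own proof of this theorem: it states the result as classical and defers to Chapter~5 of \cite{AtHa}. Your proposal is exactly the textbook argument one would find there, so there is nothing to compare; you have simply filled in what the paper chose to omit.
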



The possibility of solving Equation~\ref{eqn:operator_eqn} with different choices of $f$ is particularly important in the applications that we intend to consider, as it is interpreted as the initialization of the model. 
While various models employ iterative procedures for operator learning tasks implicitly, they lack a general theoretical perspective that justifies their approach. Several other models can be modified using iterative approaches to produce better performance with lower number of parameters. We will give experimental results in this regard to validate the practical benefit of our theoretical framework. 


While the iterations considered so far have a fixed procedure which is identical per iteration, more general iterative procedures where the step changes between iterations are also diffused, and this can be done also adaptively.  


\subsection{Applications}
\paragraph{Significance and Implications.}
Our results underscore the existence of a solution for Equation~\ref{eqn:operator_eqn} under certain conditions. Moreover, when the operator $T$ is bounded, our iterative method showcases uniform convergence. It follows that ensuring that the operators approximated by deep neural network architectures are contractive, we can introduce an iterative procedure that will allow us to converge to the fixed point as in Equation~\ref{eqn:Fixed_Point}. 

\paragraph{Iterative Methods in Modern Deep Learning.}
In contemporary deep learning architectures, especially those like Transformers, Stable Diffusion, AlphaFold, and Neural Integral Equations, the importance of operator learning is growing. However, these models, despite employing iterative techniques, often lack the foundational theoretical perspective that our framework provides. We will subsequently present experimental results that vouch for the efficacy and practical advantages of our theoretical insights.

\paragraph{Beyond Basic Iterations.}
While we have discussed iterations with fixed procedures, it is imperative to highlight that more general iterative procedures exist, and they can adapt dynamically. Further, there exist methods to enhance the rate of convergence of iterative procedures, and our framework is compatible with them.

\section{Neural Network Architectures as Iterative Operator Equations} \label{sec:Examples}

In this section, we explore how various popular neural network architectures align with the framework of iterative operator learning. By emphasizing this operator-centric view, we unveil new avenues for model enhancements. Notably, shifting from implicit to explicit iterations can enhance model efficacy, i.e. through shared parameters across layers. A detailed discussion of the various methodologies given in this section is reported in Appendix~\ref{sec:app_Examples}. In the appendix, we investigate architectures such as neural integral equations, transformers, AlphaFold for protein structure prediction, diffusion models, graph neural networks, autoregressive models, and variational autoencoders. We highlight the iterative numerical techniques underpinning these models, emphasizing potential advancements via methods like warm restarts and adaptive solvers. Empirical results substantiate the benefits of this unified perspective in terms of accuracy and convergence speed.

\paragraph{Diffusion models.}\label{subsec:Diffusion}
Diffusion models, especially denoising diffusion probabilistic models (DDPMs), capture a noise process and its reverse (denoising) trajectory. While score matching with Langevin dynamics models (SMLDs) is relevant, our focus is primarily on DDPMs for their simpler setup. These models transition from complex pixel-space distributions to more tractable Gaussian distributions. Notably, increasing iterations can enhance the generative quality of DDPMs, a connection we wish to deepen. This procedure can be seen as instantiating an iteration procedure, where iterations are modified as in the methods found in \cite{wolfe1978extended}. This operator setting and iterative interpretation is described in detail in Appendix~\ref{subsec:app_Diffusion}.

To empirically explore convergence with iterations in diffusion models, we train 10 different DDPMs with 100-1000 iterations and analyze their training dynamics and perceptual quality. Figure \ref{fig:fidcurve} reveals that increasing timesteps improves FID scores of generated images. Additionally, Figure \ref{fig:fidcurve} demonstrates a consistent decrease in both training and test loss with more time steps, attributed to the diminished area under the expected KL divergence curve over time (Figure \ref{fig:diffusionKLD}). Notably, FID scores decline beyond the point of test loss convergence, stabilizing after approximately 150,000 steps (Figure \ref{fig:diffusionKLD}). This behavior indicates robust convergence with increasing iterations.

\begin{figure}[t]
\centering
    \begin{subfigure}[b]{0.35\textwidth}
        \includegraphics[width=\textwidth]{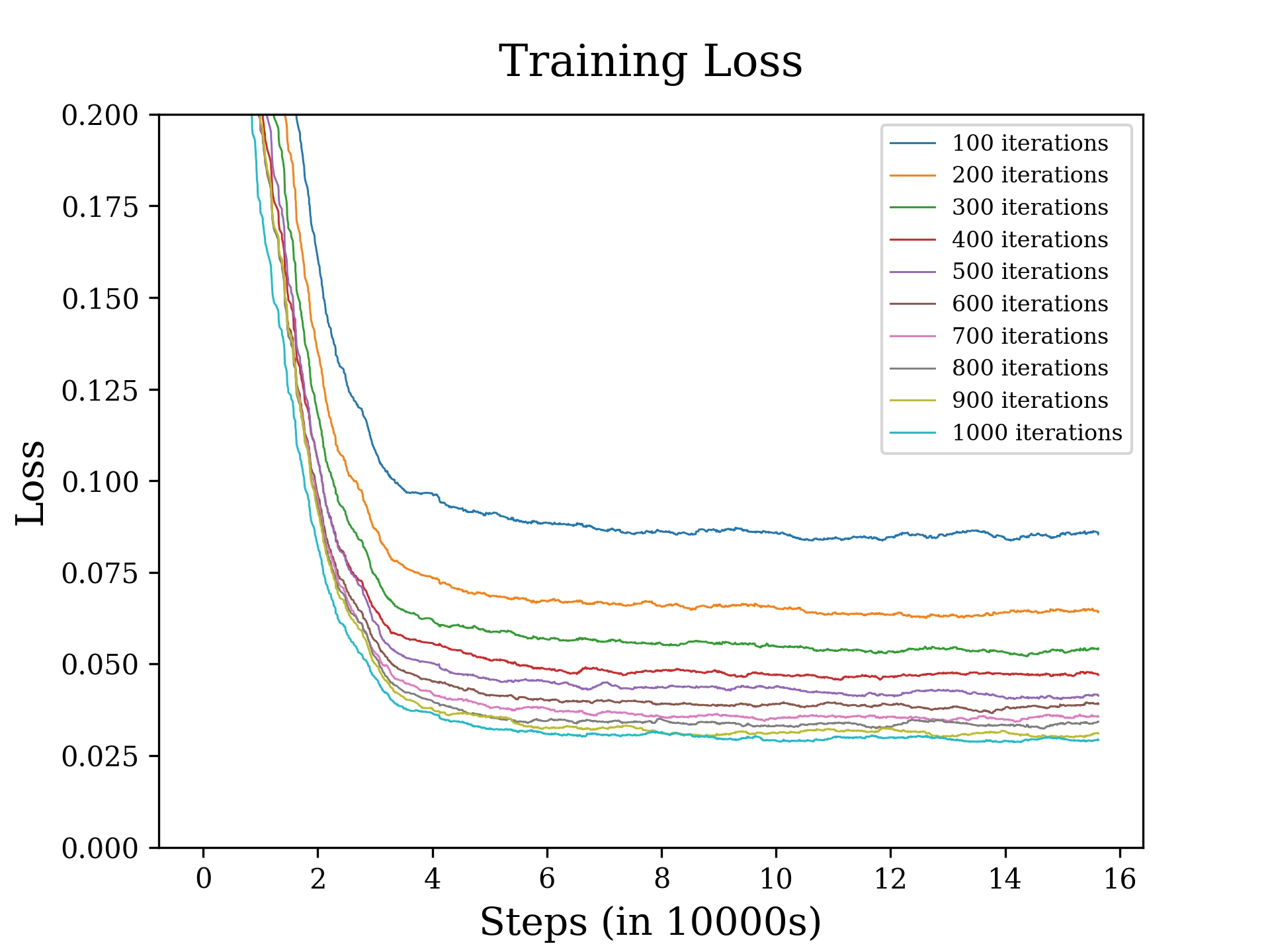}
    \end{subfigure}
    \hspace{-1.5em}
    \begin{subfigure}[b]{0.35\textwidth}
        \includegraphics[width=\textwidth]{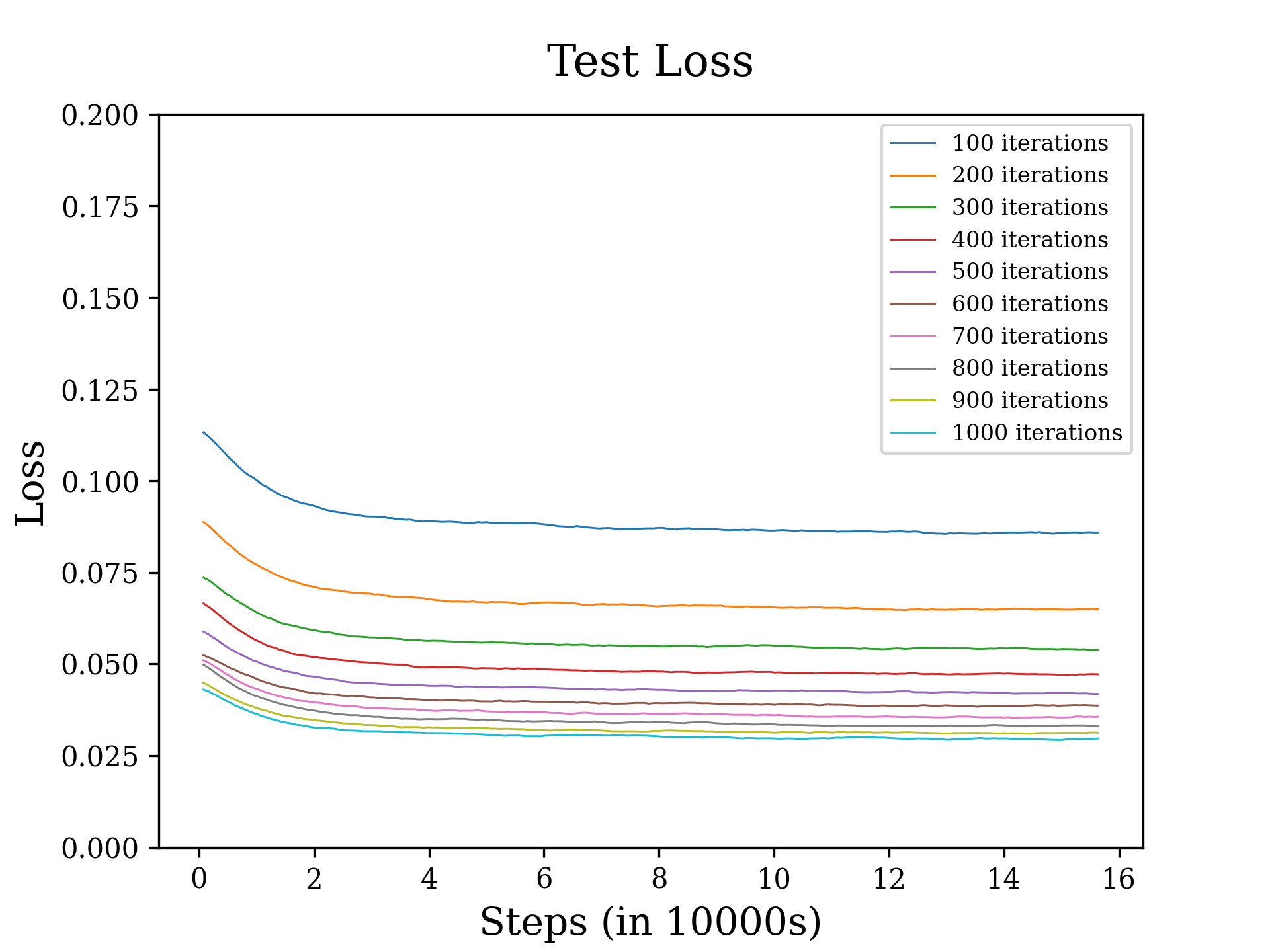}
    \end{subfigure}
    \hspace{-1.5em}
    \begin{subfigure}[b]{0.35\textwidth}
    \includegraphics[width=\textwidth]{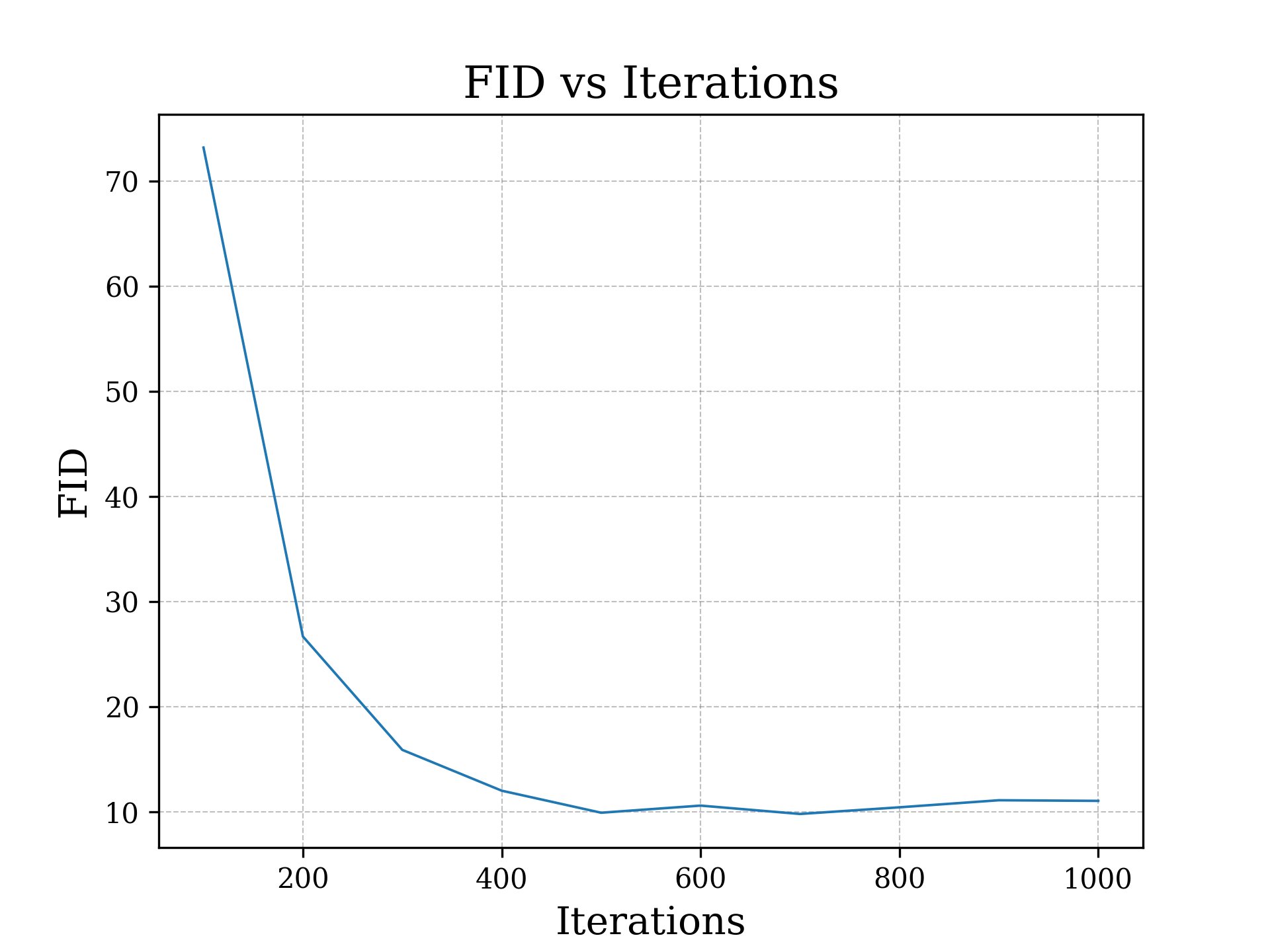}
    \end{subfigure}
\caption{\textbf{Left and Middle}: Losses always decrease with more iterations in DDPMs. Training is stable and overfitting never occurs. EMA smoothing with $\alpha=0.1$ is used for the loss curves to make the differences clearer. \textbf{Right}: DDPMs show FID and loss improves with an increased number of iterations on CIFAR-10. The number of iterations represent the denoising steps during training and inference. All diffusion models, UNets of identical architecture, are trained on CIFAR-10's training dataset with 64-image batches.}
\label{fig:fidcurve}
\end{figure}

\paragraph{AlphaFold.}\label{subsec:Alpha}
AlphaFold, a revolutionary protein structure prediction model, takes amino acid sequences and predicts their three-dimensional structure. While the model's intricacies can be found in \cite{jumper2021highly}, our primary interest lies in mapping AlphaFold within the operator learning context.
Considering an input amino acid sequence, it undergoes processing to yield a multiple sequence alignment (MSA) and a pairwise feature representation. These data are subsequently fed into Evoformers and Structure Modules, iteratively refining the protein's predicted structure. We can think of the output of the Evoformer model as pair of functions lying in some discretized Banach space, while the Structure Modules of AlphaFold can be thought of as being operators over a space of matrices. This is described in detail in Appendix~\ref{subsec:app_Alpha}.

To empirically explore the convergence behavior of AlphaFold as a function of iterations, we applied AlphaFold-Multimer across a range of 0-20 recycles on each of the 29 monomers using ground truth targets from CASP15. Figure \ref{fig:gdtalphafold} presents the summarized results, which show that while on average the GDT scores and RMSD improve with AlphaFold-Multimer, not all individual targets consistently converge, as depicted in Figures \ref{fig:gdtheatmap} and \ref{fig:rmsdheatmap}. Given that AlphaFold lacks a convergence constraint in its training, its predictions can exhibit variability across iterations.

\begin{figure}[t]
\centering
\begin{subfigure}[b]{0.4\textwidth}
\includegraphics[width=\textwidth]{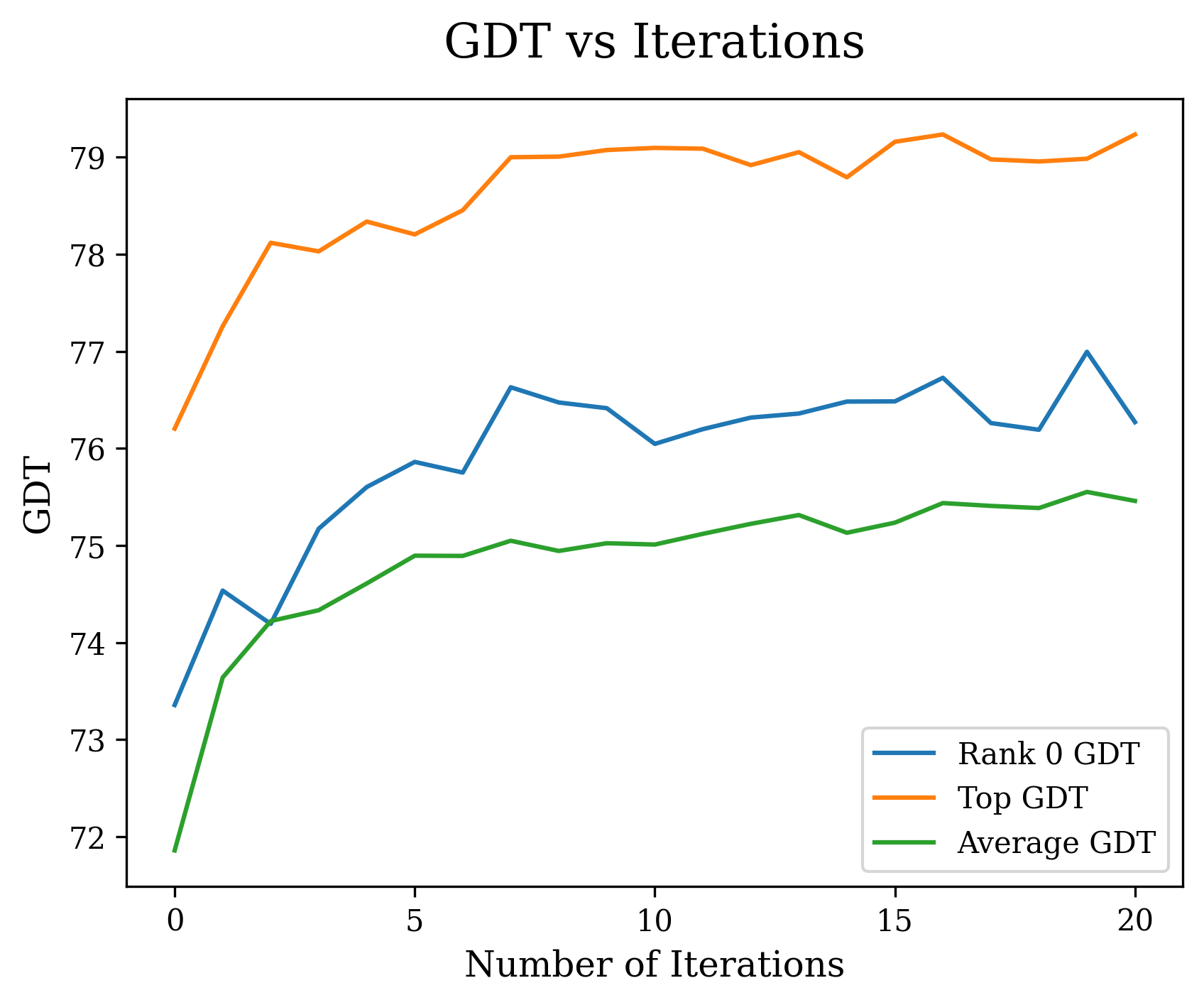}
\end{subfigure}
\begin{subfigure}[b]{0.4\textwidth}
\includegraphics[width=\textwidth]{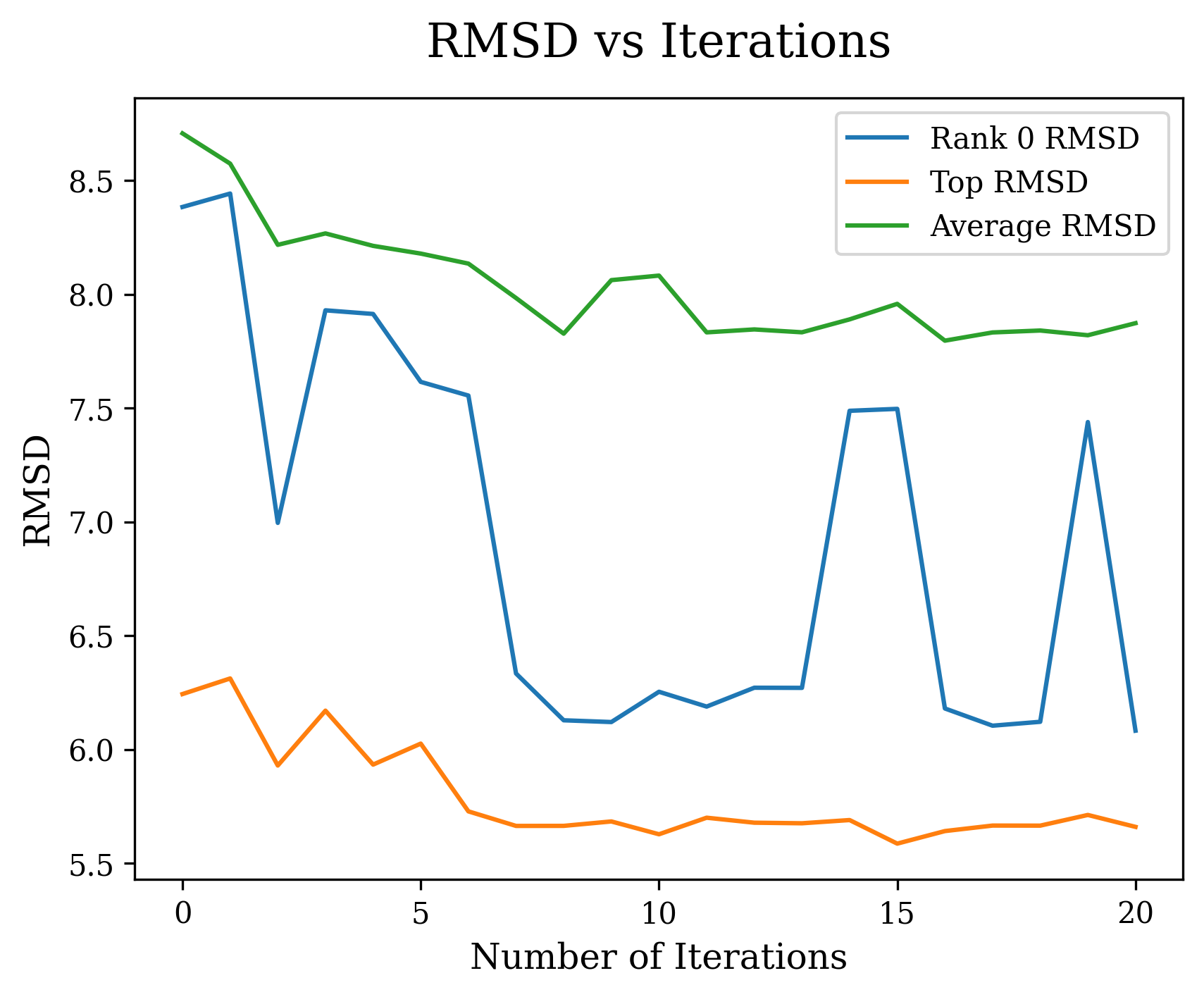}
\end{subfigure}
\caption{On average, additional iterations enhance AlphaFold-Multimer's performance, though the model doesn not invariably converge with more iterations. Target-specific trends can be seen in Figures \ref{fig:gdtheatmap} and \ref{fig:rmsdheatmap}.}
\label{fig:gdtalphafold}
\end{figure}

\paragraph{Graph Neural Networks.}\label{subsec:Graphs}

Graph neural networks (GNNs) excel in managing graph-structured data by harnessing a differentiable message-passing mechanism. This enables the network to assimilate information from neighboring nodes to enhance their representations. We can think of the feature spaces as being Banach spaces of functions, which are discretized according to some grid. The GNN architecture can be thought of as being an operator acting on the direct sum of the Banach spaces, where the underlying geometric structure of the graph determines how the operator combines information through the topological information of the graph. A detailed description is given in Appendix~\ref{subsec:theoretical_GNN}, where theoretical guarantees for the convergence of the iterations are given, and Appendix~\ref{subsec:app_Graphs}.

\paragraph{Neural Integral Equations.}\label{subsec:ANIE}

Neural Integral Equations (NIEs), and their variant Attentional Neural Integral Equations (ANIEs), draw inspiration from integral equations. Here, an integral operator, determined by a neural network, plays a pivotal role.

Denoting the integrand of the integral operator as $G_\theta$ within an NIE, the equation becomes:

$$
\mathbf y = f(\mathbf y, \mathbf x, t) + \int_{\Omega\times [0,1]} G_\theta(\mathbf y, \mathbf x, \mathbf z, t, s) d\mathbf z ds
$$

To solve such integral equations, one very often uses iterative methods, as done in \cite{ANIE} and the training of the NIE model consists in finding the parameters $\theta$ such that the solutions of the corresponding integral equations model the given data. A more detailed discussion of this model is given in Appendix~\ref{subsec:app_ANIE}.

\section{Experiments}

In this section, we showcase experiments highlighting the advantages of explicit iterations. We introduce a new GNN architecture based on Picard iteration and enhance vision transformers with Picard iteration.

\subsection{PIGN: Picard Iterative Graph Neural Network}\label{subsec:PIGN}

To showcase the benefits of explicit iterations in GNNs, we developed \textbf{\underline{P}}icard \textbf{\underline{I}}teration \textbf{\underline{G}}raph neural \textbf{\underline{N}}etwork (PIGN), a GNN that applies Picard iterations for message passing. We evaluate PIGN against state-of-the-art GNN methods and another iterative approach called IterGNN \cite{tang2020towards} on node classification tasks.

GNNs can suffer from over-smoothing and over-squashing, limiting their ability to capture long-range dependencies in graphs \cite{nguyen2023revisiting}. We assess model performance on noisy citation graphs (Cora and CiteSeer) with added drop-in noise. Drop-in noise involves increasing a percentage $p$ of the bag-of-words feature values, hindering classification. We also evaluate on a long-range benchmark (LRGB) for graph learning \cite{dwivedi2022LRGB}.

Table ~\ref{table: GNN noised datasets} shows PIGN substantially improves accuracy over baselines on noisy citation graphs. The explicit iterative process enhances robustness. Table ~\ref{table: GNN LRGB} illustrates PIGN outperforms prior iterative and non-iterative GNNs on the long-range LRGB benchmark, using various standard architectures. Applying Picard iterations enables modeling longer-range interactions.

The PIGN experiments demonstrate the benefits of explicit iterative operator learning. Targeting weaknesses of standard GNN training, PIGN effectively handles noise and long-range dependencies. A theoretical study of convergence guarantees is given in Appendix~\ref{subsec:theoretical_GNN}.




\begin{algorithm}
\caption{Picard Iteration Graph Neural Network (PIGN)}
\begin{algorithmic}[1]
\Require
    \Statex $f$: Backbone GNN block
    \Statex Smoothing factor: $\alpha\in[0,1]$
    \Statex Max number of iterations: $n$
    \Statex Input graph: $G=(V,E)$ with node features $X$
    \Statex Convergence threshold: $\epsilon$
\State $x_0 \gets X$ \Comment{Initialization}
\State $k \gets 0$
\While{$||x_{k+1}-x_k|| > \epsilon \And k<n$}
    \State $z_{k+1} \gets f(G,x_k)$ \Comment{One iteration of Message Passing}
    \State $x_{k+1} \gets \alpha x_k + (1 - \alpha) z_{k+1}$ \Comment{Update the node embedding with smoothing}
    \State $k \gets k+1$
\EndWhile
\State \Return $x_n$ 
\end{algorithmic}
\label{algo:GNN}
\end{algorithm}

\begin{table}
\centering
\begin{tabular}{lccc}
\toprule
 & \textbf{GCN} {\tiny \cite{kipf2017semi}} & \textbf{GAT} {\tiny \cite{velickovic2018graph}} & \textbf{GraphSAGE} {\tiny \cite{hamilton2017inductive}} \\
\midrule
w/o iterations                 & $0.1510 \pm 0.0029$ &$0.1204 \pm 0.0127$ & $0.3015 \pm 0.0032$ \\
IterGNN                &$0.1736 \pm 0.0311$  &$0.1099 \pm 0.0459$ & $0.1816 \pm 0.0014$\\
PIGN (Ours)           & $\mathbf{0.1831 \pm 0.0038}$ & $\mathbf{0.1706 \pm 0.0046}$ &$\mathbf{0.3560 \pm 0.0037}$ \\

\bottomrule
\end{tabular}
\caption{F1 scores of different models on the standard test split of the LRGB PascalVOC-SP dataset. Rows refer to model frameworks and columns are GNN backbone layers. A budget of 500k trainable parameters is set for each model. Each model is run on the same set of 5 random seeds. The mean and standard deviation are reported. For IterGNN with GAT backbone, two of the runs keep producing exploding loss so the reported statistics only include three runs.}
\label{table: GNN LRGB}
\end{table}


\subsection{Enhancing Transformers with Picard Iteration}

We hypothesize that many neural network frameworks can benefit from Picard iterations. Here, we empirically explore adding iterations to Vision Transformers. Specifically, we demonstrate the benefits of explicit Picard  teration in transformer models on the task of solving the Navier-Stokes partial differential equation (PDE) as well as self-supervised masked prediction of images. We evaluate various Vision Transformer (ViT) architectures \cite{dosovitskiy2020image} along with Attentional Neural Integral Equations (ANIE) \cite{ANIE}.

For each model, we perform training and evaluation with different numbers of Picard iterations as described in Section \ref{sec:iterative techniques}. We empirically observe improved performance with more iterations for all models, since additional steps help better approximate solutions to the operator equations.

Table \ref{table:ViT_PDE} shows lower mean squared error on the PDE task for Vision Transformers when using up to three iterations compared to the standard single-pass models. Table~\ref{tab:ViT_image} shows a similar trend for self-supervised masked prediction of images. Finally, Table~\ref{table:ANIE_PDE} illustrates that higher numbers of iterations in ANIE solvers consistently reduces error. We observe in our experiments across several transformer-based models and datasets that, generally, more iterations improve performance.

Overall, these experiments highlight the benefits of explicit iterative operator learning. For transformer-based architectures, repeating model application enhances convergence to desired solutions. Our unified perspective enables analyzing and improving networks from across domains. A theoretical study of the convergence guarantees of the iterations is given in Appendix~\ref{subsec:Transformers}.



\begin{table}[!t]
\centering
\begin{tabular}{lccc}
\toprule
\textbf{Model} & $N_{\rm iter}=1$ & $N_{\rm iter}=2$ & $N_{\rm iter}=3$ \\
\midrule
ViT         & $0.2472 \pm 0.0026$ & $0.2121 \pm 0.0063$ & $\mathbf{0.0691 \pm 0.0024}$ \\
ViTsmall    & $0.2471 \pm 0.0025$ & $0.1672 \pm 0.0087$ & $\mathbf{0.0648 \pm 0.0022}$ \\
ViTparallel & $0.2474 \pm 0.0027$ & $0.2172 \pm 0.0066$ & $\mathbf{0.2079 \pm 0.0194}$\\
ViT$3$D     & $0.2512 \pm 0.0082$ & $\mathbf{0.2237 \pm 0.0196}$ & $0.2529 \pm0 0.0079$\\
\bottomrule
\end{tabular}
\caption{ViT models used to solve a PDE (Navier-Stokes). The mean squared error is reported for each model as the number of iterations varies. A single iteration indicates the baseline ViT model. Higher iterations perform better than the regular ViT ($N_{\rm iter} = 1$).}
\label{table:ViT_PDE}
\end{table}

\begin{table}[!t]
\centering
\begin{tabular}{lccc}
\toprule
\textbf{Model} & $N_{\rm iter}=1$ & $N_{\rm iter}=2$ & $N_{\rm iter}=3$ \\
\midrule
ViT (MSE) & $ 0.0126 \pm 0.0006 $ & $ \mathbf{0.0121 \pm 0.0006} $ & $ 0.0122 \pm 0.0006 $  \\
ViT (FID) & $ 20.0433 $ & $ 20.0212 $ & $ \mathbf{19.2956} $\\
\bottomrule
\end{tabular}

\caption{ViT models trained with a pixel dropout reconstruction objective on CIFAR-10. The ViT architecture contains 12 encoder layers, 4 decoder layers, 3 attention heads in both the encoder and decoder. The embedding dimension and patch size are 192 and 2. The employed loss is $\text{MSE}((1-\lambda)T(x_i) + \lambda x_i, y)$, computed on the final iteration $N_{\text{iter}}=i$. Images are altered by blacking out 75\% of pixels. During inference, iterative solutions are defined as $x_{i+1} = (1-\lambda)T(x_i) + \lambda x_i$, for $i\in \{0, 1, \dots N\}$. Here, $N = 2$ and $\lambda = 1/2$.} 
\label{tab:ViT_image}
\end{table}


\begin{table}[!t]
\centering
\resizebox{\textwidth}{!}{
\begin{tabular}{lccccc}
\toprule
\textbf{Model size} & $N_{\rm iter}=1$ & $N_{\rm iter}=2$ & $N_{\rm iter}=4$ & $N_{\rm iter}=6$ & $N_{\rm iter}=8$ \\
\midrule
$1H|1B$ & $0.0564 \pm 0.0070$ & $0.0474 \pm 0.0065$ & $0.0448 \pm 0.0062$ & $0.0446 \pm 0.0065$ & $\mathbf{0.0442\pm 0.0065}$ \\
$4H|1B$ & $0.0610 \pm 0.0078$ & $0.0516 \pm 0.0083$ & $0.0512 \pm 0.0070$ & $0.0480 \pm 0.0066$ & $\mathbf{0.0478 \pm 0.0066}$ \\
$2H|2B$ & $0.0476 \pm 0.0065$ & $0.0465 \pm 0.0067$ & $0.0458 \pm 0.0067$ & $0.0451 \pm 0.0064$ & $\mathbf{0.0439 \pm 0.0062}$ \\
$4H|4B$ & $0.0458 \pm 0.0062$ & $0.0461 \pm 0.0065$ & $0.0453 \pm 0.0063$ & $0.0453 \pm 0.0061$ & $\mathbf{0.0445 \pm 0.0059}$ \\
\bottomrule
\end{tabular}
}
\caption{Performance of ANIE on a PDE (Navier-Stokes) as the number of iterations of the integral equation solver varies, and for different sizes of architecture. Here $H$ indicates the number of heads and $B$ indicates the number of blocks (layers). A single iteration means that the integral operator is applied once. As the number of iterations of the solver increases, the performance of the model in terms of mean squared error improves.}
\label{table:ANIE_PDE}
\end{table}

\section{Discussion}\label{discussion}

We introduced an iterative operator learning framework in neural networks, drawing connections between deep learning and numerical analysis. Viewing networks as operators and employing techniques like Picard iteration, we established convergence guarantees. Our empirical results, exemplified by PIGN—an iterative GNN, as well as an iterative vision transformer, underscore the benefits of explicit iterations in modern architectures.

For future work, a deeper analysis is crucial to pinpoint the conditions necessary for convergence and stability within our iterative paradigm. There remain unanswered theoretical elements about dynamics and generalization. Designing network architectures inherently tailored for iterative processes might allow for a more effective utilization of insights from numerical analysis. We are also intrigued by the potential of adaptive solvers that modify the operator during training, as these could offer notable advantages in both efficiency and flexibility. 


In summation, this work shines a light on the synergies between deep learning and numerical analysis, suggesting that the operator-centric viewpoint could foster future innovations in the theory and practical applications of deep neural networks.

\newpage 
\bibliographystyle{alpha}
\bibliography{refs}

\appendix

\section{Theoretical Guarantees on convergence}\label{sec:theoretical}

In this section we consider the problem of convergence of the iterations for certain classes of models considered in the article. 

\subsection{Generalities on Operator Differentiation}

We recall some general facts about Frechet and Gateaux differentiation of operators between Banach spaces which will be used later in the article to provide criteria for the convergence of the iterations. 

\begin{definition}
    Let $X, Y$ be a Banach spaces, let $T: X \longrightarrow Y$ be a continuous mapping, and let $u\in X$ be an arbitrary element in the domain of $T$. We say that $T$ is Frechet differentiable at $u$ if it holds that
    \begin{eqnarray}
        T(u+h) = T(u) + Ah + o(||h||) 
    \end{eqnarray}  
    for some linear operator $A: X\longrightarrow Y$, for $h \rightarrow 0$. In such a situation we say that $A$ is the Frechet derivative of $T$ at $u$, and denote the operator $A$ by the symbol $T'(u)$.    

    We say that $T$ is Gateaux differentiable at $u$ if it holds that 
    \begin{eqnarray}
        \lim_{t\rightarrow 0} \frac{T(u+th) - T(u)}{t} = Ah
    \end{eqnarray}
    for some linear operator $A: X\longrightarrow Y$, and for all $h\in X$. In such a situation we say that $A$ is the Gateaux derivative of $T$ at $u$, and denote the operator $A$ by the symbol $T'(u)$. 
\end{definition}

\begin{remark}
    {\rm 
        Observe that the derivative is a mapping from the domain of $T$ to the space of linear maps $X\longrightarrow Y$, which we denote (following common notational convention) by $L(X,Y)$. The latter is a Banach space whenever $Y$ is a Banach space, where the norm is the usual norm of linear operators. 
    }       
\end{remark}

Of course, the notion of Frechet derivative is stronger than that of Gateaux derivative. In other words, if $T$ is Frechet differentiable, then it is also Gateax differentiable. The converse is not always true if the spaces $X$ and $Y$ are infinite dimensional. 

Differentiability of operators is a fundamental notion in functional analysis. We recall here an important property, namely the Mean Value Theorem. 

\begin{proposition}{(Mean Value Theorem for operators)}\label{pro:MVT}
    Let $X,Y$ be Banach spaces, and assume that $T: U\subset X \longrightarrow Y$ is an operator where $U\subset X$ is an open set of $X$. Suppose that $T$ is (either Frechet or Gateaux) differentiable on $U$ and that $T': U \longrightarrow L(X,Y)$ is a continuous mapping. Let $u,v\in U$ and assume that the segment between $u$ and $v$ lies in $U$. Then we have
    \begin{eqnarray}
        ||T(u) - T(v)||_Y \leq \sup_{\theta\in [0,1]} ||T'((1-\theta)u+\theta w)||\ ||u-w||_X.
    \end{eqnarray}
\end{proposition}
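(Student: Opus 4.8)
The plan is to reduce the statement to the classical one-variable case by composing $T$ with a path from $v$ to $u$. Define $\varphi\colon[0,1]\longrightarrow Y$ by $\varphi(\theta) = T\bigl((1-\theta)v + \theta u\bigr)$; this makes sense because the segment joining $v$ and $u$ lies in $U$. The chain rule for Frechet (resp. Gateaux) differentiable maps gives $\varphi'(\theta) = T'\bigl((1-\theta)v+\theta u\bigr)(u-v)$, so $\|\varphi'(\theta)\|_Y \leq \|T'((1-\theta)v+\theta u)\|\,\|u-v\|_X$, where the first norm on the right is the operator norm in $L(X,Y)$.

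Next I would pass from the derivative bound to a bound on $\varphi(1)-\varphi(0) = T(u)-T(v)$. The cleanest route avoids assuming $Y$-valued Riemann integration: apply a scalar functional. Let $\ell \in Y^*$ be a norm-one functional with $\ell\bigl(T(u)-T(v)\bigr) = \|T(u)-T(v)\|_Y$, which exists by Hahn--Banach. Then $g := \ell\circ\varphi\colon[0,1]\longrightarrow\mathbb{R}$ is differentiable with $g'(\theta) = \ell\bigl(\varphi'(\theta)\bigr)$ and $|g'(\theta)| \leq \|\varphi'(\theta)\|_Y$. The ordinary one-dimensional mean value theorem (or the estimate $|g(1)-g(0)| \leq \sup_{\theta}|g'(\theta)|$) then yields
\begin{eqnarray}
\|T(u)-T(v)\|_Y = g(1)-g(0) \leq \sup_{\theta\in[0,1]} |g'(\theta)| \leq \sup_{\theta\in[0,1]} \|T'((1-\theta)v+\theta u)\|\,\|u-v\|_X,
\end{eqnarray}
which is the claimed inequality (the roles of the dummy variables $u,v$ in the supremum are symmetric, so this matches the displayed statement).

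The step that needs the most care is justifying that $g = \ell\circ\varphi$ is genuinely differentiable on $[0,1]$ with the stated derivative, and that one may legitimately invoke the scalar mean value theorem at the endpoints. Continuity of $T'$ on $U$ guarantees $\varphi'$ is continuous, hence $g'$ is continuous, so $g$ is $C^1$ on $[0,1]$ and the elementary mean value theorem applies without subtlety. In the purely Gateaux setting one must also check the chain rule step: since $\theta\mapsto (1-\theta)v+\theta u$ is an affine map with constant ``velocity'' $u-v$, the Gateaux differentiability of $T$ along exactly the direction $u-v$, combined with continuity of $T'$, suffices to conclude $\varphi$ is differentiable with $\varphi'(\theta)=T'(\cdot)(u-v)$; this is where the continuity hypothesis on $T'$ is essential and cannot be dropped. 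Everything else is routine.
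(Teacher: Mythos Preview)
The paper does not supply a proof of this proposition; it is stated as a classical fact (with the subsequent Lemma~\ref{lem:Lip_Differentiable} derived from it), so there is nothing to compare your argument against line by line. Your proof is correct and is the standard textbook argument: restrict to the segment to get a one-variable map $\varphi$, linearize via a Hahn--Banach norming functional to reduce to a scalar function $g$, and apply the ordinary mean value theorem to $g$. Your remarks about the Gateaux case are also on point: the composition with an affine path means only directional differentiability along $u-v$ is needed, and the continuity hypothesis on $T'$ ensures $g\in C^1[0,1]$ so the scalar MVT applies cleanly.
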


As a consequence, we obtain the following useful criterion for an operator to be Lipschitz. 

\begin{lemma}\label{lem:Lip_Differentiable}
    Let $T: U\subset X\longrightarrow Y$ be an operator betweem Banach spaces that is (either Frechet or Gateaux) differentiable with continuous derivative over the open and convex $U$. If $||T'(x)||\leq M$ for all $x\in U$ and some $M>0$, then $T$ is Lipschitz continuous on $U$ with Lipschitz constant at most $M$. 
\end{lemma}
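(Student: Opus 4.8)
The plan is to derive this directly from the Mean Value Theorem for operators (Proposition~\ref{pro:MVT}), whose hypotheses are exactly tailored to the present situation, so that the lemma falls out as an immediate corollary.

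First I would fix two arbitrary points $u, v \in U$. Since $U$ is convex, the whole segment $\{(1-\theta)u + \theta v : \theta \in [0,1]\}$ lies in $U$, so the geometric hypothesis of Proposition~\ref{pro:MVT} is met; moreover $T$ is by assumption differentiable on $U$ with continuous derivative $T' \colon U \longrightarrow L(X,Y)$, so all hypotheses of the Mean Value Theorem hold. Applying it gives
$$
\|T(u) - T(v)\|_Y \leq \sup_{\theta \in [0,1]} \|T'((1-\theta)u + \theta v)\|\ \|u - v\|_X .
$$
Next I would estimate the supremum: for each $\theta \in [0,1]$ the point $(1-\theta)u + \theta v$ again lies in $U$ by convexity, so the uniform bound $\|T'(x)\| \leq M$ applies and yields $\|T'((1-\theta)u + \theta v)\| \leq M$; taking the supremum over $\theta$ preserves this, so the right-hand side is at most $M\|u - v\|_X$.

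Combining the two displays, $\|T(u) - T(v)\|_Y \leq M\|u - v\|_X$ for all $u, v \in U$, which is precisely the assertion that $T$ is Lipschitz continuous on $U$ with Lipschitz constant at most $M$. There is essentially no obstacle in this argument; the only point meriting a word of care is that it is the convexity of $U$ that simultaneously makes the segment condition in Proposition~\ref{pro:MVT} valid and keeps every intermediate point $(1-\theta)u + \theta v$ inside the region where the bound on $T'$ is assumed, so both ingredients can be used together. (One could additionally remark that the argument localizes, giving an $M$-Lipschitz bound on any convex subset of $U$ on which $\|T'\| \leq M$, but this is not needed for the statement as phrased.)
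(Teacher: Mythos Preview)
Your proof is correct and is essentially identical to the paper's own argument: both apply Proposition~\ref{pro:MVT} directly, use convexity of $U$ to ensure the segment lies in $U$, and then bound the supremum by $M$ via the hypothesis on $\|T'\|$.
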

\begin{proof}
    Let $u,w$ be arbitrary elements of $U$. Since $U$ is convex, the line segment between $u$ and $w$ is inside $U$:
    $$
    (1-\theta)u + \theta v\in U
    $$
    for all $\theta \in [0,1]$, which by assumption implies that $||T'((1-\theta)u + \theta v)|| \leq M$. From the Mean Value Theorem for operators (Proposition~\ref{pro:MVT}) we have that
    \begin{eqnarray}
        ||T(u) - T(v)|| \leq \sup_{\theta\in [0,1]} ||T'((1-\theta)u+\theta v)||\ ||u-v||_X \leq M\cdot ||u-v||_X,
    \end{eqnarray} 
    from which it follows that $T$ is Lipschitz continuous on $U$ with Lipschitz constant at most $M$. 
\end{proof}

\subsection{Iterations and Transformer Models}\label{subsec:Transformers}

While iterative approaches as those in ANIE have been explicitly used to solve a corresponding fixed-point type of problem for a transformer operator that mimics integration, one can imagine to generalize such a procedure to any transformer architecture, and iterate through the same transformer several times during training and evaluation. 

This formulation of transformer models, as operators whose corresponding Equation~\ref{eqn:operator_eqn} we want to solve, is inspired by integral equation methods, and will be shown to give better performance with fixed number of parameters in the experiments. In fact, one similar approach has been followed for some transformer architectures in the diffusion models, as we discuss in Subsection~\ref{subsec:Diffusion}. 

We consider a transformer consisting of a single self-attention layer. A generalization to multiple layers is obtained straightforwardly from the considerations given in this subsection. We want to show that Theorem~\ref{thm:iteration_method} is applicable when considering iterations of a transformer architecture. 

Here we set $T : X \longrightarrow X$ to be a single layer transformer architecture consisting of a self-attention mechanism with matrices $W^Q, W^K, W^V$, which are known in the literature as queries, keys and values, respectively. 

First, we will show that transformers are Frechet and Gateaux differentiable. Then, making use of Lemma~\ref{lem:Lip_Differentiable} we will find a criterion to force the iteration method applied to a transformer to converge to a solution of an equation of the same type as in Equation~\ref{eqn:operator_eqn}. Here we consider the space $X$ of continuous (and therefore bounded) functions $[0,1] \longrightarrow \mathbf R$. We assume that the space is appropriately discretized through a grid of points $\{t_i\}_{i=0}^{N-1}\subset [0,1]$ with $t_0 = 0$ and $t_{N-1}=1$. The same reasoning can be applied to discretizations of higher dimensional domains, but we will explicitly consider only this simpler situation. Here a function $f$ is given by a sequence of points $f(t_i)$ which coincides with the value of $f$ on $t_i$  for all $i=0, \ldots , N-1$.  

\begin{lemma}\label{lem:Trans_Frechet}
    Let $T$ denote a single layer transformer architecture where $W^Q, W^K, W^V$ denote its query, key and value matrices. Then $T$ is Frechet (hence Gateaux) differentiable.  
\end{lemma}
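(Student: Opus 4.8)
The plan is to realize the single-layer transformer $T$ as a finite composition of elementary maps, each of which is Frechet differentiable, and then invoke the chain rule for Frechet derivatives. Since $X$ is the space of continuous functions $[0,1]\to\mathbf{R}$ discretized on the grid $\{t_i\}_{i=0}^{N-1}$, an input function is represented by a vector in $\mathbb{R}^N$ (or $\mathbb{R}^{N\times d}$ if the tokens carry a $d$-dimensional embedding), so $T$ is a map between finite-dimensional Banach spaces; in that setting Frechet differentiability coincides with ordinary total differentiability, and it suffices to show that $T$ is assembled from smooth building blocks.

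Concretely, I would write the self-attention output as $T(x) = A(x)\,V(x)$, where $V(x) = xW^V$ and $A(x) = \softmax\!\big(c\,(xW^Q)(xW^K)^{\top}\big)$, with $c$ the usual scaling constant and the softmax applied row-wise, and then decompose as follows: (i) $x\mapsto xW^Q$, $x\mapsto xW^K$, $x\mapsto xW^V$ are linear, hence Frechet differentiable with derivatives equal to themselves; (ii) $(Q,K)\mapsto QK^{\top}$ is bilinear, hence Frechet differentiable with derivative $(H_Q,H_K)\mapsto H_Q K^{\top} + Q H_K^{\top}$; (iii) the softmax map $z\mapsto \big(e^{z_j}/\sum_\ell e^{z_\ell}\big)_j$ is real-analytic on all of $\mathbb{R}^N$, since its denominator, a sum of exponentials, is strictly positive everywhere, hence it is $C^\infty$ and Frechet differentiable, with Jacobian $\mathrm{diag}(a) - aa^{\top}$ at $a = \softmax(z)$, and applying it row-wise preserves smoothness; (iv) matrix multiplication $(A,V)\mapsto AV$ is bilinear, hence Frechet differentiable.

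I would then assemble these pieces using the standard facts that compositions, sums, and Cartesian products of Frechet differentiable maps are again Frechet differentiable, together with the chain rule $(G\circ F)'(u) = G'(F(u))\circ F'(u)$; since $T$ arises from the maps in (i)--(iv) together with the linear ``diagonal'' $x\mapsto(x,x,x)$, it follows that $T$ is Frechet differentiable, with derivative obtained by chaining the derivatives listed above. By the remark preceding the statement, Frechet differentiability implies Gateaux differentiability, which completes the proof.

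There is no real obstacle here; the only points requiring a moment's care are verifying that softmax is differentiable \emph{everywhere} --- which comes down to the denominator being positive and bounded away from zero on compact sets --- and being explicit that we work with the discretized, finite-dimensional $X$, so that no infinite-dimensional subtleties (such as Gateaux differentiability failing to imply Frechet differentiability) arise. If one augments $T$ with the usual residual connection, a layer normalization (whose additive $\epsilon>0$ keeps it smooth), or a feed-forward sublayer with a smooth activation, the same decomposition applies verbatim; only a nonsmooth activation such as ReLU would warrant a separate comment.
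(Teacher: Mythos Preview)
Your argument is correct, but it differs from the paper's in two respects.

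First, the operator you analyze is not quite the paper's. In the paper's proof the single-layer transformer is taken to be the bare trilinear form $T(y) = (yW^Q)(yW^K)^{\top}(yW^V)$, with no softmax; you instead treat the standard attention map $T(x)=\softmax\!\big(c\,(xW^Q)(xW^K)^{\top}\big)\,xW^V$. Your version is the more realistic one, and your decomposition handles the paper's simpler $T$ a fortiori (just drop step~(iii)), so there is no logical gap here---only a mismatch in what ``transformer'' means.

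Second, the proof strategies differ. The paper argues directly: since $T$ is cubic in $y$, one expands $T(y+h)$, reads off the three terms linear in $h$ (namely $(hW^Q)(yW^K)^{\top}(yW^V)$ and its two cyclic analogues), and observes that the remaining quadratic and cubic pieces are $o(\|h\|)$. You instead factor $T$ through linear, bilinear, and smooth pointwise maps and invoke the chain rule. The paper's route yields an explicit closed form for $T'(y)$ in one line, which the authors later want to insert into a regularizing loss; your route is more modular and, as you note, immediately accommodates softmax, residual connections, layer norm, and feed-forward sublayers without further work. Both are valid in the finite-dimensional discretized setting.
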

\begin{proof}
    Let $y$ be a discretized function in $X$, i.e. $y \approx \{y(t_i)\}$. Then we have 
    $$T(y) = (yW^Q)(yW^K)^T(yW^V) = W_yK_y^TV_y.$$ 
    We observe that this is cubic in the input $y$. Let us consider $h\in X$ which is a discretized function as well. Then we can write
    \begin{eqnarray}
        T(y+h) &=& ((y+h)W^Q)((y+h)W^K)^T((y+h)W^V)\\
        &=& (yW^Q)(yW^K)^T(yW^V) + (hW^Q)(yW^K)^T(yW^V)\\
        && + (yW^Q)(hW^K)^T(yW^V) + (yW^Q)(yW^K)^T(hW^V)\\
        && + R(h)\\
        &=& T(y) + T'_1(y)(h) + T'_2(y)(h) + T'_3(y)(h) + R(h),
    \end{eqnarray}
    where $R(h)$ contains terms that are quadratic and cubic in $h$, and we have set $T'_i(y)(h)$ to be the components that are linear in $h$ and where $h$ appears in the position $i$ with respect to the product $QK^TV$. Therefore, we have written $T(y+h)$ as $T(y)$ plus terms that are linear in $h$, and a remainder that is at least quadratic in $h$, and that it is therefore an $o(||h||)$ when $h\longrightarrow 0$. It follows that $T$ is Frechet at $y$, and since $y$ was arbitrarily chosen, $T$ is Frechet in the whole domain.  
\end{proof}

\begin{theorem}
    Let $T$ denote a single layer transformer such that its Frechet derivative has norm bounded by some $M$ such that $\lambda M<1$ for all $y\in U$, where $U$ is defined to be the unit ball around $0$ in $X$. Then, Equation~\ref{eqn:operator_eqn} admits a solution for any initialization $f$, and this is the limit of the iterations $y_k := f + T(y_{k-1})$ for all $k>1$ with $y_0 := f$.  
\end{theorem}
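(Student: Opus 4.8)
The plan is to read this off Theorem~\ref{thm:iteration_method} and Corollary~\ref{cor:existence} once the bound on the Frechet derivative has been promoted to a genuine Lipschitz estimate. By Lemma~\ref{lem:Trans_Frechet} the single-layer transformer $T$ is Frechet differentiable on all of $X$, and by hypothesis $\|T'(y)\|\le M$ for every $y$ in the open convex set $U$ (the unit ball about $0$). Lemma~\ref{lem:Lip_Differentiable} then yields that $T$ restricted to $U$ is Lipschitz with constant at most $M$, so the iteration operator $P_\lambda(y)=f+\lambda T(y)$ is Lipschitz on $U$ with constant at most $|\lambda|M<1$, i.e.\ a contraction there. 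With this in place, telescoping exactly as in the proof of Theorem~\ref{thm:iteration_method} gives $\|y_{k+1}-y_k\|\le (|\lambda|M)^k\,\|\lambda T(y_0)\|$, hence $\{y_k\}$ is Cauchy; since $X$ is Banach it converges to some $y^\ast$, and continuity of $T$ lets us pass to the limit in $y_k=f+\lambda T(y_{k-1})$ to obtain $\lambda T(y^\ast)+f=y^\ast$, a solution of Equation~\ref{eqn:operator_eqn}.

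The step I expect to be the main obstacle — and the one that needs genuine care rather than a citation — is that the contraction estimate is only valid on $U$, so one must verify that the Picard orbit $y_0:=f$, $y_k:=f+\lambda T(y_{k-1})$ never escapes $U$. Here I would use that a self-attention block is cubic in its input, so $T(0)=0$ (the product $(yW^Q)(yW^K)^\top(yW^V)$, and likewise its softmax-weighted version, vanishes at $y=0$); combined with the Lipschitz bound this gives $\|T(y)\|=\|T(y)-T(0)\|\le M\|y\|\le M$ for all $y\in U$. An easy induction then shows $\|y_k\|\le\|f\|+|\lambda|M$ for every $k$, so the orbit stays in (the closure of) $U$ provided $\|f\|\le 1-|\lambda|M$. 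Thus, strictly speaking, the conclusion holds for every initialization $f$ in the ball of radius $1-|\lambda|M$; "for any $f$" is recovered only after rescaling the domain (or shrinking $\lambda$), and I would state the theorem in that sharpened form. I would also carry the scalar $\lambda$ explicitly through the recursion as $y_k=f+\lambda T(y_{k-1})$, to stay consistent with the hypothesis $\lambda M<1$ and with Equation~\ref{eqn:operator_eqn}.

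Once invariance of $U$ is established, the rest is bookkeeping: the limit point $y^\ast$ lies in $\overline{U}$, uniqueness of the solution within $U$ follows from the Banach--Caccioppoli theorem (Theorem~\ref{thm:Banach_Caccioppoli}) applied to $P_\lambda|_U$, and the rate estimates in that theorem transfer verbatim. Finally, the asserted extension to multiple layers needs only the remark that a composition of maps each Lipschitz on the relevant balls is again Lipschitz with constant the product of the individual constants, so the same contraction-plus-invariance argument applies to $T=T_L\circ\cdots\circ T_1$ under the corresponding smallness condition on $\lambda$ and $f$.
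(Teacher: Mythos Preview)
Your approach is essentially identical to the paper's: invoke Lemma~\ref{lem:Trans_Frechet} for differentiability, Lemma~\ref{lem:Lip_Differentiable} to upgrade the derivative bound to a Lipschitz constant on $U$, and then appeal to Theorem~\ref{thm:iteration_method} and Corollary~\ref{cor:existence}. Your additional discussion of whether the Picard orbit remains in $U$ (and the resulting restriction $\|f\|\le 1-|\lambda|M$) is a genuine point of rigor that the paper's own three-line proof does not address at all.
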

\begin{proof}
    We observe first that by applying Lemma~\ref{lem:Trans_Frechet} it follows that $T$ admits Frechet derivative everywhere on its domain, so that $T'$ exists, and the assumption in the statement of this theorem is meaningful. Since $M < 1/\lambda$, it follows from Lemma~\ref{lem:Lip_Differentiable} that $T$ is Lipschitz over $U$ with Lipschitz constant $L$ such that $\lambda L < 1$. It follows that the assumptions in Theorem~\ref{thm:iteration_method} and its corollaries hold and the result now follows. 
\end{proof}

In practice, in order to enforce the convergence of the iterative method to a solution of Equation~\ref{eqn:operator_eqn} we can add a term in the loss where the Frechet derivative as computed in Lemma~\ref{lem:Trans_Frechet} appears explicitly. 

The algorithm for the iterations takes in the case of transformers a similar form as in the case of Algorithm~\ref{algo:GNN}. This is given in Algorithm~\ref{algo:Transformer}.

\begin{algorithm}
\caption{Iterative procedure for solving Equation~\ref{eqn:operator_eqn} with a transformer based model $T$}
\begin{algorithmic}[1]
\Require
    \Statex $T$: Transformer architecture
    \Statex $f$: Free function in Equation~\eqref{eqn:operator_eqn}
    \Statex Smoothing factor: $\alpha\in[0,1]$
    \Statex Max number of iterations: $n$
\State $x_0 \gets f$
\State $k \gets 0$
\While{$||x_{k+1}-x_k|| > \epsilon \And k<n$}
    \State $z_{k+1} \gets T(x_k)+f$ \Comment{Iteration}
    \State $x_{k+1} \gets \alpha x_k + (1 - \alpha) z_{k+1}$ \Comment{Update the new solution using the previous two iterations and smoothing}
    \State $k \gets k+1$
\EndWhile
\State \Return $x_n$ 
\end{algorithmic}
\label{algo:Transformer}
\end{algorithm}

\subsection{Graph Neural Networks}\label{subsec:theoretical_GNN}

We now consider the case of GNNs, and reformulate them as an operator learning problem in some space of functions. We analyze one well known type of GNN architecture and show that under certain constraints, the convergence of the iterations is guaranteed. 

First, recall that a GNN hads a geometric support, the graph $\Gamma = \{V_\Gamma, E_\Gamma\}$, where $V_\Gamma$ is the set of vertices and $E_\Gamma$ is the set of edges, along with spaces of features $X_v$ for each $v\in V_\Gamma$. The neural network $T$ acts on the spaces of features based on the geometric information given by $E_\Gamma$, which determine the neighborhoods of the vertices. Here we consider a slightly more general situation where we have a copy of the Hilbert space $X_v = L_2([-1,1])$ associated to each vertex $v$, and we define $T$ to be an operator on the direct sum of spaces $X = \bigoplus_{v\in E_\Gamma} X_v$. The operator $T$ uses the geometric information of $E_\Gamma$ to act on $X$. The case of neural networks, in practice, can be thought of as a case where the function spaces $L_2([-1,1])$ are discretized, giving rise to the feature spaces. The discussion that follows can be recovered adapted to the discretized case as well. More generally, one can think of each $X_v$ as being a Banach space of functions, but we will not discuss this more general case in detail here. 

At each vertex, GNNs apply an aggregate function that depends on the architecture used. A common choice is to use some pooling function such as applying a matrix $W$ to each feature vector on a neighborhood of the vertex $v$, use a RELU nonlinearity, and take the maximum (component-wise) over the outputs. See~\cite{SAGE}. This operation is followed by some combination function, which in the variant \cite{SAGE} that we are considering, usually consists of concatenation (followed by a linear map). We translate the aggregate function described above into operators over the direct sum Hilbert space $X$. Since our discussion can be directly extended to operators including the combine function as well, we will consider our operator $T$ as consisting only of the aggregation functions. We indicate the aggregation operator at node $v\in V_\Gamma$ by $A_v$. This is obtained as a composition of a linear operator 
$W: L_2([-1,1]) \longrightarrow L_2([-1,1])$ (a matrix in the discretized form), which is applied over all the function in $X_{v_i}$ for all $v_i \in N(v)$ , the neighborhood of $v$. This is followed by the mapping $R$, which is performs RELU nonlinearity over the entries of the function output of $W$ component-wise. Finally, we take the maximum over $i$ of component-wise. This procedure is a direct generalization to our setting of the common pooling aggregation function, which is recovered when we discretize the space using a grid in the domain $[-1,1]$. The linear map $W$, here, is assumed to be shared among operators $A_v$, but our discussion can be straightforwardly adapted to the case of non-shared linear maps $W$. 

Recall that an element of the direct sum of Hilbert spaces $X = \bigoplus_i X_i$ takes the form $f = f_1 + \cdots + f_k$, where $k = |V_\Gamma|$. We write the operator $A_v$ as
\begin{eqnarray}\label{eqn:Av}
    A_v(f)(t) = \max_i \{ R(W(f_i))(t)\ |\ v_i\in N(v)\}. 
\end{eqnarray}
Note that the dependence of $A_v$ on the edges of the geometric support $\Gamma$ gets manifested through the neighborhood of $v$, $N(v)$. The GNN operator $T$, then, is given by 
\begin{eqnarray}\label{eqn:def_GNN_T}
        T(f) = \sum_i \pi_iT(f) = \sum_i A_{v_i}(f),
\end{eqnarray}  
where $\pi_i : X \longrightarrow X_{v_i}$ is the canonical projection and each summand is defined through Equation~\ref{eqn:Av}. 

Before finding explicit guarantees on the convergence of the iterations for GNNs, we have the following useful result.

\begin{lemma}\label{lem:poly}
    Let $T$ denote the GNN operator described above, let $\Gamma$ be its geometric support, and let $X$ be the total Hilbert space. Assume that $W$ is a Lipschitz operator with constant $L$. Then, there exists a polynomial of degree $1$ with non-negative coefficients, $P(z_1, \ldots, z_k)\in \mathbb Z[z_1,\ldots, z_k]$, such that the following inequality holds
\begin{eqnarray}\label{ineq:GNN}
        ||T(f) - T(g)|| \leq L\cdot P(||f_1 - g_1||, \ldots , ||f_k - g_k||), 
\end{eqnarray}
where $f = \sum_i f_i$ and $g = \sum_i g_i$, with $f_i, g_i\in X_{v_i}$ for all $i = 1, \ldots, k$. Moreover, $P$ depends only on $\Gamma$. 
\end{lemma}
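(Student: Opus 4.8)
The plan is to peel off the structure of the GNN operator $T$ one layer at a time, bounding each building block --- the direct-sum decomposition via the projections $\pi_i$, the pointwise maximum defining each $A_v$, the RELU nonlinearity $R$, and the linear map $W$ --- by an elementary Lipschitz-type estimate, and then collecting the purely combinatorial bookkeeping into the polynomial $P$.

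First I would use $T(f) = \sum_i A_{v_i}(f)$ together with the fact that $\pi_i T(f) = A_{v_i}(f) \in X_{v_i}$ to write $||T(f) - T(g)|| \le \sum_{i=1}^k ||A_{v_i}(f) - A_{v_i}(g)||$; for the Hilbert-space direct-sum norm this is just $\bigl(\sum_i a_i^2\bigr)^{1/2} \le \sum_i a_i$ for $a_i \ge 0$. Next, for each fixed $i$, I would estimate $||A_{v_i}(f) - A_{v_i}(g)||$ pointwise, applying at each $t$ the elementary inequalities $|\max_j a_j - \max_j b_j| \le \max_j |a_j - b_j| \le \sum_j |a_j - b_j|$ with $a_j = R(W(f_j))(t)$, $b_j = R(W(g_j))(t)$ and $j$ ranging over the indices with $v_j \in N(v_i)$, and then the triangle inequality for the $L_2$-norm, to obtain $||A_{v_i}(f) - A_{v_i}(g)|| \le \sum_{j:\, v_j \in N(v_i)} ||R(W(f_j)) - R(W(g_j))||$.

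Then I would invoke that $R$ is $1$-Lipschitz (since $|\max(a,0) - \max(b,0)| \le |a - b|$ pointwise) and that $W$ is Lipschitz with constant $L$ by hypothesis, giving $||R(W(f_j)) - R(W(g_j))|| \le ||W(f_j) - W(g_j)|| \le L\, ||f_j - g_j||$. Combining the three bounds yields $||T(f) - T(g)|| \le L \sum_{i=1}^k \sum_{j:\, v_j \in N(v_i)} ||f_j - g_j||$. Finally, swapping the order of summation and setting $d_j := \#\{\, i : v_j \in N(v_i)\,\}$ --- a non-negative integer read off directly from $E_\Gamma$ --- the double sum becomes $\sum_{j=1}^k d_j\, ||f_j - g_j||$, so $P(z_1,\dots,z_k) := \sum_{j=1}^k d_j\, z_j$ is the required degree-$1$ polynomial in $\mathbb{Z}[z_1,\dots,z_k]$ with non-negative coefficients, and it manifestly depends only on $\Gamma$.

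I do not anticipate a real obstacle: the argument is a chain of routine Lipschitz estimates. The points that need a little care are (i) fixing the convention for the norm on $X = \bigoplus_v X_v$ and checking the $\ell^1$-type bound is valid for it; (ii) performing the maximum-versus-norm interchange in the right order, i.e. applying the sub-additivity inequality for $\max$ pointwise \emph{before} passing to $L_2$-norms; and (iii) confirming that $W$ and $R$ contribute only the scalar $L$ and a unit Lipschitz factor, so that the coefficients $d_j$ depend on $\Gamma$ alone and not on $f$, $g$, or the shared matrix $W$.
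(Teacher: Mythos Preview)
Your proposal is correct and follows essentially the same route as the paper's own proof: decompose $T$ over the vertices, bound each $A_v$ via the pointwise $\max$-inequality and the Lipschitz constants of $R$ and $W$, then sum over $v$ and collect the neighborhood multiplicities into a linear polynomial determined by $\Gamma$. If anything, your write-up is more explicit --- you justify the direct-sum norm bound, handle the pointwise-then-norm step for the maximum carefully, and actually name the coefficients $d_j$ --- whereas the paper leaves these details implicit.
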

\begin{proof}
    By definition of norm in the direct sum space, and considering the fact that $T$ decomposes into a direct sum as in Equation~\ref{eqn:def_GNN_T}, we have
    \begin{eqnarray}
        ||T(f) - T(g)|| &=& || \bigoplus_v A_v(f) - \bigoplus_v A_v(g)|| \\
        &=& ||\bigoplus_v [A_v(f) - A_v(g)]||\\
        &=& \bigoplus_v ||A_v(f) - A_v(g)||.
    \end{eqnarray}
    We consider therefore the term $||A_v(f) - A_v(g)||$ for an arbitrary $v\in V_\Gamma$. It holds
    \begin{eqnarray}
            ||A_v(f) - A_v(g)|| &=& ||\max_i(RW(f_i)) - \max_i(RW(g_i))||\\
            &\leq& || \max_i( RW(f_i) - RW(g_i))||  \\
            &\leq& \sum_i ||RW(f_i) - RW(g_i)||\\
            &\leq& \sum_i L||f_i-g_i||,
    \end{eqnarray}
    where we have used the fact that if $W$ is Lipschitz with constant $L$, then $RW$ is also Lipschitz with constant at most $L$ (since RELU only zeroes the negative part). Therefore, we have
    \begin{eqnarray}
        ||T(f) - T(g)|| \leq \sum_v \sum_i L||f_i-g_i||.
    \end{eqnarray}
    Some of the $||f_i-g_i||$ appear in multiple $v$'s, depending on the nieghborhoods determined by $E_\Gamma$. However, this is a degree $1$ polynomial in the $||f_i-g_i||$'s, and since it is determined uniquely by the graph $\Gamma$, we have completed the proof.
\end{proof}

Now we can show that when the maps $W$ are contractive, the iterations are guaranteed to converge. 

\begin{theorem}
        Let $T$ be a GNN operator acting on $X$ as above. Suppose $W$ is Lipschitz and let $\alpha := \max_i \alpha_i$, where $\alpha_i$ are the coefficient of the polynomial $P$ in Lemma~\ref{lem:poly}, such that $L\alpha <1$. Then, the fixed point problem
        $$
                T(y) + f = y
        $$
        has a unique solution for any choice of $f\in X$, and the iterations $y_n$ converge to such solution, where $y_0 = f$ and $y_{n+1} = T(y_n) + f$. 
\end{theorem}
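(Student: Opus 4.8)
The plan is to show that the hypothesis $L\alpha<1$ forces the GNN operator $T$ to be a contraction on $X$, after which the statement is an immediate consequence of the results already established in Section~\ref{sec:operator_equations}. Concretely, I would first upgrade the polynomial estimate of Lemma~\ref{lem:poly} into a genuine global Lipschitz bound, and then apply Corollary~\ref{cor:existence} (with $\lambda=1$) together with the Banach--Caccioppoli theorem (Theorem~\ref{thm:Banach_Caccioppoli}) to obtain existence, uniqueness, and convergence of the prescribed iteration in one stroke.

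For the first step, recall from Lemma~\ref{lem:poly} that $\|T(f)-T(g)\|\le L\,P(\|f_1-g_1\|,\dots,\|f_k-g_k\|)$, where $P$ is a degree-one polynomial with non-negative coefficients depending only on $\Gamma$. Writing $P(z_1,\dots,z_k)=\sum_i\alpha_i z_i$ — it has no constant term, since by its construction in the proof of Lemma~\ref{lem:poly} it merely records how many neighbourhoods each $\|f_i-g_i\|$ contributes to, so $P(0,\dots,0)=0$ — I would bound each coefficient by $\alpha=\max_i\alpha_i$ and use the structure of the direct-sum norm to get $\sum_i\alpha_i\|f_i-g_i\|\le\alpha\sum_i\|f_i-g_i\|=\alpha\|f-g\|$. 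This yields $\|T(f)-T(g)\|\le L\alpha\|f-g\|$ for all $f,g\in X$, i.e. $T$ is Lipschitz on all of $X$ with constant $k:=L\alpha<1$.

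With this in hand the conclusion follows quickly. Setting $\lambda=1$, Corollary~\ref{cor:existence} applied to the Lipschitz operator $T$ (with $|\lambda|k=L\alpha<1$) shows that $T(y)+f=y$ admits a solution for every $f\in X$, realized as the limit of $y_0=f$, $y_{n+1}=T(y_n)+f$ — exactly the iteration in the statement, whose Cauchy property was established inside the proof of Theorem~\ref{thm:iteration_method}. For uniqueness, observe that $P_f:=T+f$ is itself Lipschitz with the same constant $L\alpha<1$, since translation by the fixed element $f$ does not affect the Lipschitz constant; hence Theorem~\ref{thm:Banach_Caccioppoli} applies to $P_f$, giving a unique fixed point to which $y_n=P_f^{\,n}(y_0)$ converges from any initialization, with the geometric rate $\|y_n-y\|\le\frac{(L\alpha)^n}{1-L\alpha}\|y_0-y_1\|$ of \eqref{eqn:BC_rate1}.

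The only genuinely delicate point is the passage $\sum_i\alpha_i\|f_i-g_i\|\le\alpha\|f-g\|$: this is where one must be explicit about the norm placed on the direct sum $X=\bigoplus_v X_v$ and check that the constant $\alpha=\max_i\alpha_i$ appearing in the statement is indeed the right one (it is, provided the direct-sum norm is taken as $\|f\|=\sum_i\|f_i\|$, which is the convention implicit in the manipulations in the proof of Lemma~\ref{lem:poly}). Everything else is bookkeeping: verifying that the affine form $P$ has vanishing constant term so that the estimate is homogeneous, and then invoking the fixed-point machinery of Section~\ref{sec:operator_equations} verbatim.
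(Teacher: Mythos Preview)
Your proposal is correct and follows essentially the same route as the paper: both arguments bound the polynomial estimate of Lemma~\ref{lem:poly} by $L\alpha\|f-g\|$ using $\alpha=\max_i\alpha_i$ and the $\ell^1$-type direct-sum norm, and then invoke the fixed-point machinery of Section~\ref{sec:operator_equations}. Your version is in fact more explicit than the paper's about which results are being applied (Corollary~\ref{cor:existence} for existence, Theorem~\ref{thm:Banach_Caccioppoli} for uniqueness and the rate) and about the norm convention on $X$, whereas the paper simply appeals to ``the previous machinery''.
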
  
\begin{proof}
       Using Lemma~\ref{lem:poly} and the hypothesis, we can reduce this result to an application of Theorem~\ref{thm:iteration_method}. In fact, let $P(z_1, \ldots, z_k) = \alpha_1z_1 + \cdots , \alpha_kz_k$. Then, from Lemma~\ref{lem:poly} we have
       \begin{eqnarray}
                ||T(f) - T(g)|| &\leq& L(\alpha_1||f_1-g_1|| + \cdots + \alpha_k||f_k - g_k||)\\
                &\leq& L\alpha (||f_1-g_1|| + \cdots + ||f_k - g_k||)\\
                &=& L\alpha||f-g||.
       \end{eqnarray}   
       Since $L\alpha < 1$ we have that $T$ is contractive in the direct sum Hilbert space $X$, and we can apply the previous machinery to enforce convergence and uniqueness. This result is independent of $f$ as in Corollay~\ref{cor:uniform}.
\end{proof}

While we have considered here the specific architecture of SAGE~\cite{SAGE}, a similar approach can be pursued in general for aggregation functions, at least with well behaved pooling functions. 

\section{Neural Network Architectures as Iterative Operator Equations in Detail} \label{sec:app_Examples}

This section underscores how some prominent deep learning models fit within our iterative operator learning framework. While many architectures employ implicit iterations, transitioning to an explicit approach has proven to yield improvements in model efficacy and data efficiency.

We delve into a range of architectures including neural integral equations, transformers, AlphaFold for protein structure prediction, diffusion models, graph neural networks, autoregressive models, and variational autoencoders. For each, we elucidate their alignment with the operator perspective and the compatibility with our framework.

Recognizing the underlying iterative numerical procedures in these models grants opportunities for enhancements using strategies such as warm restarts and adaptive solvers. Empirical results on various datasets underscore improvements in accuracy and convergence speed, underscoring the merits of this unified approach. In summary, the iterative operator perspective merges theoretical robustness with tangible benefits in modern deep learning.

\subsection{Diffusion models}\label{subsec:app_Diffusion}

Diffusion models \cite{sohl-dickstein2015deep} are a class of generative models that take a fixed noise process and learn the reverse (denoising) trajectory. Although our discussion applies to score matching with Langevin dynamics models (SMLDs) \cite{song2019generative}, we focus on denoising diffusion probabilistic models (DDPMs) \cite{ho2020denoising} due to their simpler setup. Typically, DDPMs are applied to images to learn a mapping from the intractable pixel-space distribution to the standard Gaussian distribution that can easily be sampled. It has been observed (and briefly shown in \cite{kingma2021variationaldiff}) that more iterations improve the generative quality of DDPMs, and our aim is to thoroughly explore this connection.
	
Let us review the setup for DDPMs from \cite{ho2020denoising}. For a fixed number of time steps $t = 0, 1, 2, \ldots, T$, diffusion schedule $\beta_1, \beta_2, \allowbreak \ldots, \beta_T$, and unknown pixel-space distribution $q(\mathbf{x}_0)$, the forward process is a Markov process with a Gaussian transition kernel $$q(\mathbf{x}_t|\mathbf{x}_{t-1}) \sim \mathcal{N}(\sqrt{1-\beta_t}\mathbf{x}_{t-1}, \beta_t\mathbf{I}).$$ Let $\alpha_t := 1-\beta_t$ and $\bar{\alpha}_t := \prod_{i=1}^t \alpha_i$. Given $\mathbf{x}_0$, we can deduce from the Markov property and the form of $q(\mathbf{x}_t|\mathbf{x}_{t-1})$ a closed form for $q(\mathbf{x}_t | \mathbf{x}_0)$: $$q(\mathbf{x}_t | \mathbf{x}_0) \sim \mathcal{N}(\sqrt{\bar{\alpha}_t}\mathbf{x}_0, (1-\bar{\alpha}_t)\mathbf{I}).$$ A sample from this distribution can be reparametrized as $$\mathbf{x}_t = \sqrt{\bar{\alpha}_t}\mathbf{x}_0 + \sqrt{1-\bar{\alpha}_t}\bm{\epsilon}, \quad \bm{\epsilon} \sim \mathcal{N}(\mathbf{0}, \mathbf{I}).$$ Since $q(\mathbf{x}_0)$ is unknown, the reverse process $q(\mathbf{x}_{t-1}|\mathbf{x}_t)$ cannot be analytically derived from Bayes' rule so it must be approximated. The goal of a DDPM is to approximate the reverse process by learning a noise model $\bm{\epsilon}_{\theta}(\mathbf{x}_t, t)$ that predicts $\bm{\epsilon}$ from the noisy image $\mathbf{x}_t$ at time step $t$. During training, a timestep $t$ is chosen uniformly at random, and then the loss is $C(t)||\bm{\epsilon}-\bm{\epsilon}_{\theta}(\mathbf{x}_t, t)||^2$, where $C(t)$ is some weighting depending on $t$ or equal to 1. At inference, the denoising trajectory is computed via Langevin sampling using the model as predictor of partial denoising at each time step.

The connection to operator learning is as follows (our discussion follows \cite{song2021scorebased}). In the continuous setting, we can let $\beta(t)$ be the continuous diffusion schedule corresponding to the discrete diffusion schedule $\{\beta_i\}$, and let $\mathbf{w}$ be a Wiener process. The forward SDE is given by
	$$d\mathbf{x} = -\frac{1}{2}\beta(t)\mathbf{x}dt + \sqrt{\beta(t)}d\mathbf{w},$$ and the reverse SDE is
	$$d\mathbf{x} = -\beta(t)\left(\frac{1}{2}+\nabla_\mathbf{x}\log p_t(\mathbf{x})\right)dt + \sqrt{\beta(t)}d\bar{\mathbf{w}},$$
	where $p_t(\mathbf{x})$ is the probability distribution of $\mathbf{x}$ at time $t$ and time flows backwards. SMLDs and DDPMs learn a discretized version of the reverse SDE by approximating the score $\nabla_\mathbf{x}\log p_t(\mathbf{x})$, which is easily seen to be the KL divergence terms in the loss function of DDPMs up to a constant.

 \paragraph{Diffusion training details.}
We use Hugging Face's \cite{vonplaten2022diffusers} UNet2D model with 5 downsampling blocks and 5 upsampling blocks where the 4th downsampling and 2nd upsampling blocks contain attention blocks. Each block contains 2 ResNet2D blocks, and the number of outchannels is (128, 128, 256, 256, 512, 512) from the first input convolutional layer up to the middle of the UNet, and the number of outchannels is reversed during upsampling. We set $\beta_1 = .95$, $\beta_2=.999$, $\epsilon=1e-8$ for the Adam optimizer with learning rate 1e-4, weight decay 1e-6, and 500 warmup steps. The loss function is mean-square-error loss on noise prediction for a single uniformly sampled timestep for each image. The batch size is set to 64 images, and each model is trained up to 200,000 steps.

\subsection{AlphaFold}\label{subsec:app_Alpha}

AlphaFold is a deep learning approach to predict the three-dimensional structure of proteins based on given amino acid sequences \cite{jumper2021highly}. For the sake of this paper, we briefly explain the AlphaFold model and then formulate it in the context of an operator learning problem. 

Each input amino acid sequence is first processed to create a multiple sequence alignments (MSA) representation of dimension $s\times r\times c$ and a pairwise feature representation of dimension $r\times r\times c$. Both representations are passed to repeated layers of Evoformers to encode physical, biological and geometric information into the representations. The representations are then passed to a sequence of weight-sharing Structure Modules to predict and iteratively refine the protein structure. The entire process is recycled for $N$ times in order to further refine the prediction. See \cite{jumper2021highly} for details. 

In the context of operator learning, we think of the input to the model as a pair of functions $(f,g)$, where $f:[0,1]\to\mathbb{R}^c$ and $g:[0,1]\times[0,1]\to\mathbb{R}^c$. We discretize the interval $[0,1]$ to a grid of $r$ points to represent the input functions so that they can be processed by the model.

Let $\mathcal{X}$ and $\mathcal{Y}$ denote the function spaces where $f$ and $g$ lives respectively. The entire sequence of Evoformer can be described as an operator
\[E:\mathcal{X}\times\mathcal{Y}\to\mathcal{X}\times\mathcal{Y}\]

Since the Structure Modules share the same weights, we consider each one of them as an operator $S$ and the sequence of $k$ Structure Modules as composing $S$ for $k$ times. We have
\[S:\mathcal{X}\times M_3(\mathbb{R})\times\mathbb{R}^3\to \mathcal{X}\times M_3(\mathbb{R})\times\mathbb{R}^3\]

The output space of AlphaFold is $M_3(\mathbb{R})\times\mathbb{R}^3$, consisting of pairs of rotation matrices and translation vectors. Denote $y=(f,g)\in\mathcal{X}\times\mathcal{Y}$. One cycle of the AlphaFold model can be described as an operator
\[AF:\mathcal{X}\times\mathcal{Y}\to M_3(\mathbb{R})\times\mathbb{R}^3\]
\[y\mapsto S^kT(y)\]

AlphaFold does not impose any conditions to guarantee convergence of its recycles, but it shows convergence behavior at inference in Figure \ref{fig:gdtalphafold}. We look at the GDT scores and RMSD for up to 20 iterations across 29 different monomeric proteins and 44 targets from the CASP15 dataset \cite{casp152022}. While convergence behavior is seen on average, Figures \ref{fig:gdtheatmap} and \ref{fig:rmsdheatmap} suggest that AlphaFold may benefit from an explicit convergence constraint.

\paragraph{AlphaFold inference details.}
We use the full database and multimer settings at inference. 5 random seeds are chosen for each of the 5 models. We do not fix the random seeds---each prediction uses a different set of random seeds. Our max template date is set to 2022-01-01 to avoid leakage with CASP15. The targets were selected from CASP15 and filtered on public availability of targets.

\subsection{Graph Neural Networks}\label{subsec:app_Graphs}

Graph neural networks (GNNs) specialize in processing graph-structured data by utilizing a differentiable message-passing mechanism. This mechanism assimilates information from neighboring nodes to refine node representations.

For a given graph $$\mathcal{G} = (\mathcal{V}, \mathcal{E}),$$ with $\mathcal{V}$ as the node set and $\mathcal{E}$ as the edge set, the feature vector of a node $v \in \mathcal{V}$ is represented by $\mathbf{x}_v$. A frequently used aggregation strategy is:

\begin{equation}
\mathbf{x}_v^{\prime} = f(\mathbf{x}_v, \text{AGG}_{\mathcal{N}(v)}(\{\mathbf{x}_u\,|\,u \in \mathcal{N}(v)\}))
\end{equation}

Here, $\mathcal{N}(v)$ specifies the neighbors of node $v$, $\text{AGG}$ is an aggregation function (e.g., sum or mean) applied over the neighboring nodes' features, and $f$ acts as a neural network-based update function.

Interpreting the aggregation within the framework of operator equations, let us denote $\mathcal{X}$ as $\mathbb{R}^{|\mathcal{V}| \times d}$, representing the space of node feature matrices. The aggregation and subsequent node feature updates can be described by an operator $T: \mathcal{X} \longrightarrow \mathcal{X}$, transforming the existing features into their refined versions. The equilibrium or fixed points $\mathbf{X}^*$ are achieved when:

\begin{equation}
\mathbf{X}^* = T(\mathbf{X}^*)
\end{equation}

Training GNNs thus involves setting initial features as $\mathbf{X}_0$ and allowing the operator $T$ to iteratively refine them until convergence. This operator-centric viewpoint naturally extends to multi-layered architectures, reminiscent of the GraphSAGE approach, integrating seamlessly within the proposed framework. A more theoretical framework for this setup is given in Appendix~\ref{subsec:theoretical_GNN}.

\subsection{Neural Integral Equations}\label{subsec:app_ANIE}

Neural Integral Equations (NIEs), and their variation Attentional Neural Integral Equations (ANIEs), are a class of deep learning models based on integral equations (IEs) \cite{ANIE}. Recall that an IE is a functional equation of type
\begin{equation}\label{eqn:IE}
    \mathbf y = f(\mathbf y, \mathbf x, t) + \int_{\Omega\times [0,1]} G(\mathbf y, \mathbf x, \mathbf z, t, s) d\mathbf z ds, 
\end{equation}
where the integrand function $G$ is in general nonlinear in $\mathbf y$. Several types of IEs exist, and this class contains very important examples but it is not exhaustive. 

If $G$ is contractive with respect to $\mathbf y$, then we can adapt the proof of Theorem~\ref{thm:iteration_method} to show that the equation admits a unique solution $\mathbf y$, and the iteration thereby described converges to a solution. In fact, IEs are a class of equations where iterative methods have found important applications due to their non-local nature. More specifically, we observe that when solving an IE, this cannot be solved in a sequential manner, since in order to evaluate $\mathbf y$ at any point $(\mathbf x, t)$ we need to integrate $\mathbf y$ over the full domain $\Omega\times [0,1]$, which requires knowledge of a solution globally. This is in stark contrast with ODEs, where knowledge of $\mathbf y(t)$ at one point determines $\mathbf y(t)$ at a ``close'' next point.

NIEs are equations as \ref{eqn:IE}, where the integral operator $\int_{\Omega\times [0,1]} G(\bullet, \mathbf x, \mathbf z, t, s) d\mathbf z ds$ is determined by a neural network. More specifically, $G := G_\theta$ is a neural network, where $\theta$ indicates the parameters:
\begin{equation}\label{eqn:NIE}
    \mathbf y = f(\mathbf y, \mathbf x, t) + \int_{\Omega\times [0,1]} G_\theta(\mathbf y, \mathbf x, \mathbf z, t, s) d\mathbf z ds, 
\end{equation}
Optimization of an NIE consists in finding parameters $\theta$ such that the solutions $\mathbf y$ of Equation~\ref{eqn:NIE}, with respect to different choices of $f$ called initialization, fit a given dataset. See \cite{ANIE} for details. 

ANIEs are a more general approach to IEs based on transformers (\cite{ANIE}). The corresponding equation takes the form 
\begin{equation}\label{eqn:ANIE}
    T(\mathbf y) + f(\mathbf y) = \mathbf y,
\end{equation}
where $T$ indicates a transformer architecture. In \cite{ANIE}, it is argued that a self-attention layer can be seen as an integration operator of the type given in Equation~\ref{eqn:NIE}, so that these models are conceptually the same as NIEs, but their convenience relies in the implementation of transformers. As an operator, $T$ here is assumed to map a function space into itself, where a discretization procedure has been applied on $\Omega\times [0,1]$ to obtain functions on grids. The iterative method described in Theorem~\ref{thm:iteration_method} is used for ANIEs as well, under the constraint that during training the transformer architecture determines a contractive mapping on $\mathbf y$. 

Neural Integro-Differential Equations (NIDEs) are deep learning models closely related to NIEs and ANIEs, and are based on Integro-Differential Equations (IDEs) \cite{NIDE}. The iterations for IDEs are conceptually similar to those discussed for IEs, but they also contain a differential solver step due to the presence of the differential operator in IDEs. We refer the reader to \cite{NIDE} for a more detailed treatment of the solver procedure.

\subsection{Autoregressive Models}\label{subsec:app_Autoregressive}

Autoregressive models, inherent in sequence-based tasks, predict elements of a sequence by leveraging the history of prior elements. Given a sequence \( \mathbf{x} = (x_1, x_2, ..., x_n) \), the joint distribution \( p(\mathbf{x}) \) is expressed as:

\begin{equation}
p(\mathbf{x}) = \prod_{i=1}^n p(x_i | x_{<i})
\end{equation}

where \( x_{<i} \) represents the history, i.e., \( (x_1, ..., x_{i-1}) \). Each term \( p(x_i | x_{<i}) \) uses a neural architecture to predict the distribution of \( x_i \) based on its antecedents.

Widely-recognized autoregressive networks like PixelCNN, used for images, and Transformer-based decoders like GPT for text, employ this approach. Training involves assessing the difference between predicted \( p(x_i | x_{<i}) \) and the actual \( x_i \) from the dataset.

Incorporating the iterative operator framework, consider \( \mathcal{X} \) as a functional space appropriate for sequences. This allows the definition of an operator \( T: \mathcal{X} \rightarrow \mathcal{X} \) that iteratively updates the sequence:

\begin{equation}
T(\mathbf{x}) = (x_1, f_2(x_1), ..., f_n(x_{<n}))
\end{equation}

Here, \( f_i \) is the neural component predicting \( p(x_i | x_{<i}) \). The model aims for fixed points \( \mathbf{x}^* \) where \( \mathbf{x}^* = T(\mathbf{x}^*) \), equating predicted and actual subsequences.

By repeatedly applying \( T \) to an initial sequence and utilizing metrics like the Kullback-Leibler divergence to measure proximity to fixed points, we establish a systematic approach to maximize the likelihood, aligning it with our framework.

\section{More numerical results and figures}

\begin{table}
\centering
\begin{tabular}{lcccc}
\toprule
& \textbf{Cora} & \textbf{Cora-50\%} & \textbf{CiteSeer} & \textbf{CiteSeer-50\%}\\
\midrule
GCN        & $\mathbf{80.9\pm0.6}$    & $33.2\pm12.2$      & $\mathbf{71.3\pm0.6}$       & $18.7\pm6.3$ \\
GAT &$\mathbf{80.4\pm1.1}$ & $56.3\pm4.4$ & $\mathbf{69.9\pm1.0}$ & $26.4\pm5.4$ \\
\addlinespace
IterGNN with GCN    & $60.9\pm17.4$   & $36.9 \pm 13.3$    & $45.33 \pm 10.06$  & $25.7 \pm 7.4$ \\
IterGNN with GAT & $60.5\pm17.5$ & $38.84 \pm 15.24$ & $49.91 \pm 4.94$ & $30.20 \pm7.20$\\
\addlinespace
PIGN with GCN (Ours)& $76.7\pm1.4$    & $\mathbf{43.4 \pm 6.8}$ & $68.4\pm1.5$ & $\mathbf{31.0 \pm 3.1}$ \\
PIGN with GAT (Ours) & $79.2\pm0.9$ &$\mathbf{57.3\pm4.8}$ &$66.1\pm1.7$ &$\mathbf{34.3\pm4.6}$\\
\bottomrule
\end{tabular}
\caption{Accuracy scores of the three model frameworks with two GNN backbone layers on the standard test split of the original and noisy Cora and CiteSeer datasets. Each model is run for 100 times and mean and standard deviation are reported. Our PIGN framework achieves comparable performance on orginal datasets and better performance on noisy datasets.}
\label{table: GNN noised datasets}
\end{table}

\begin{figure}[h]
    \centering
    \includegraphics[width=\textwidth]{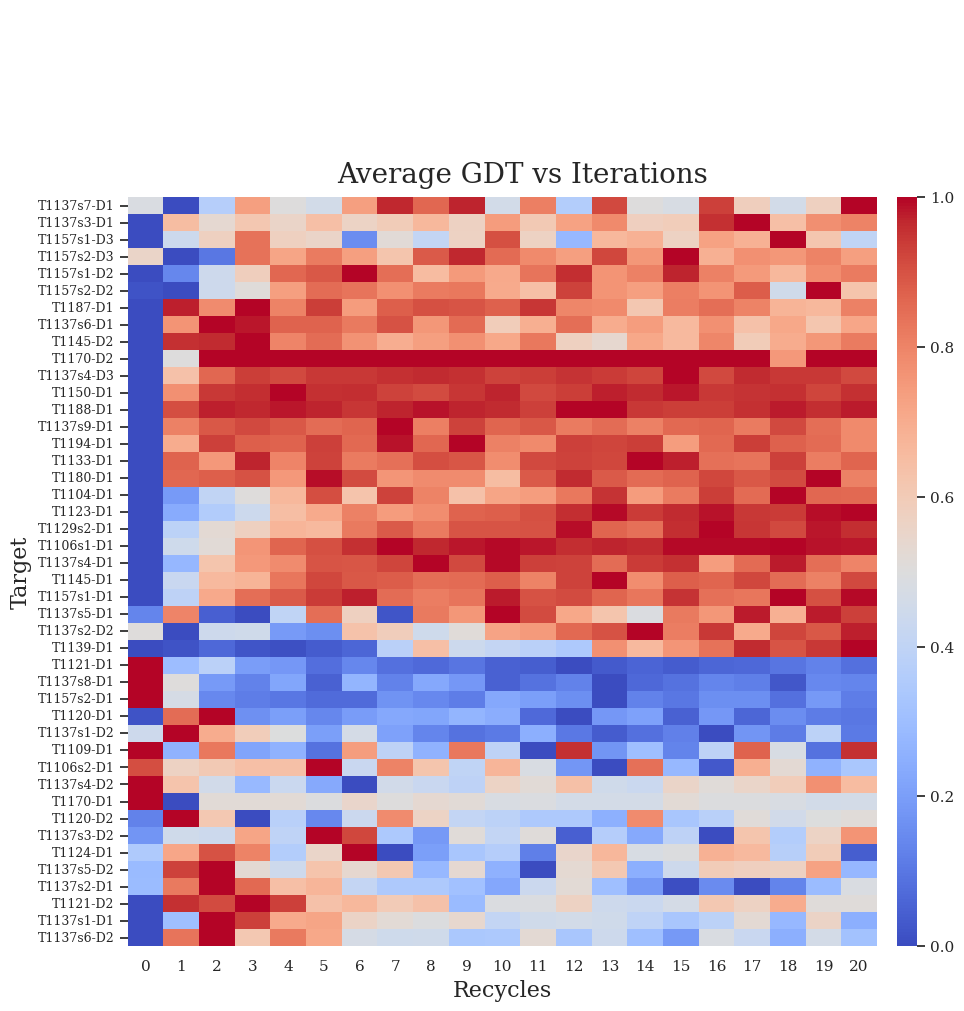}
    \caption{Average GDT scores across all 25 AlphaFold-Multimer predictions (5 seeds per each of the 5 models) for each CASP15 target vs number of recycles, min-max normalized. Larger values are better. The model diverges in GDT for some targets indicating that there is no guaranteed convergence with more recycling.}
    \label{fig:gdtheatmap}
\end{figure}
\newpage
\begin{figure}[t]
    \includegraphics[width=\textwidth]{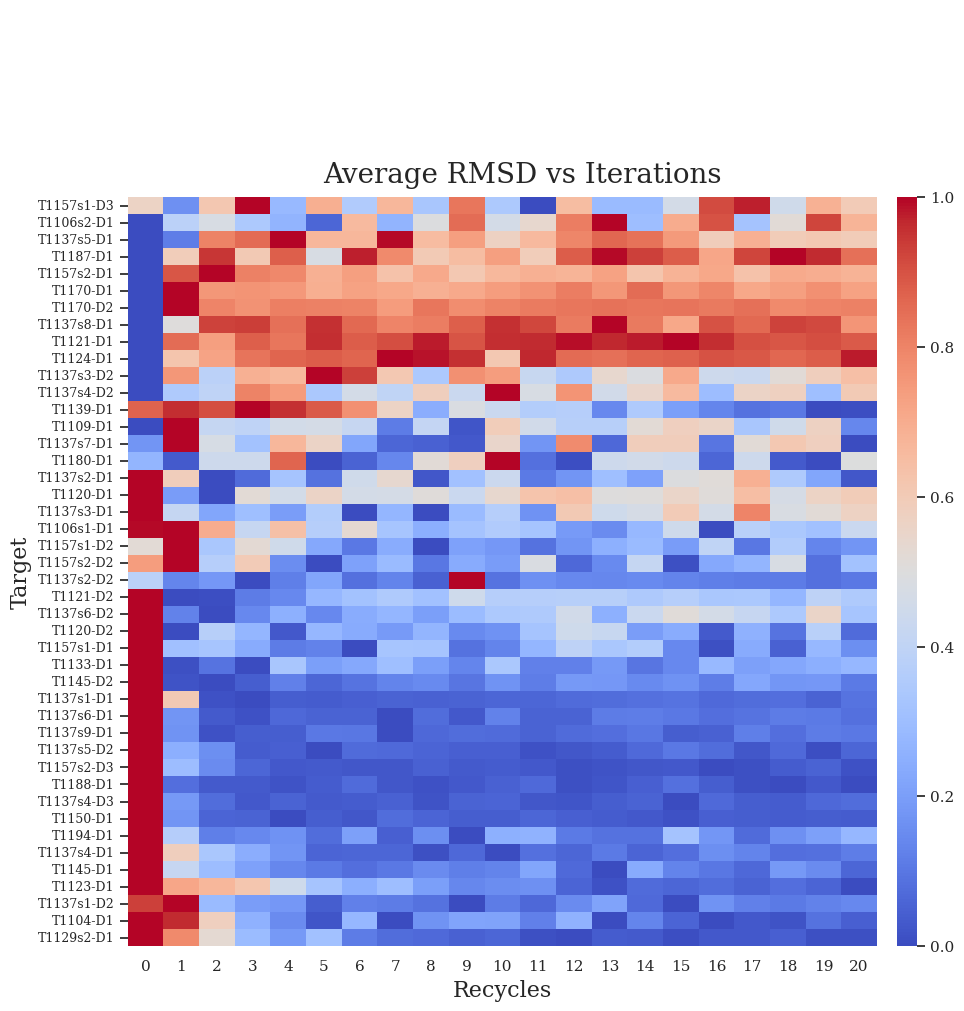}
    \label{fig:rmsdheatmap}
    \caption{Average RMSD across all 25 AlphaFold-Multimer predictions (5 seeds per each of the 5 models) for each CASP15 target, min-max normalized vs number of recycles. Lower values are better. The model diverges in RMSD for some targets indicating that there is no guaranteed convergence with more recycling.}
    \label{fig:rmsdheatmap}
\end{figure}
\newpage
\afterpage{
\begin{figure}[t]
    \centering
    \begin{subfigure}[b]{0.45\textwidth}
        \includegraphics[width=\textwidth]{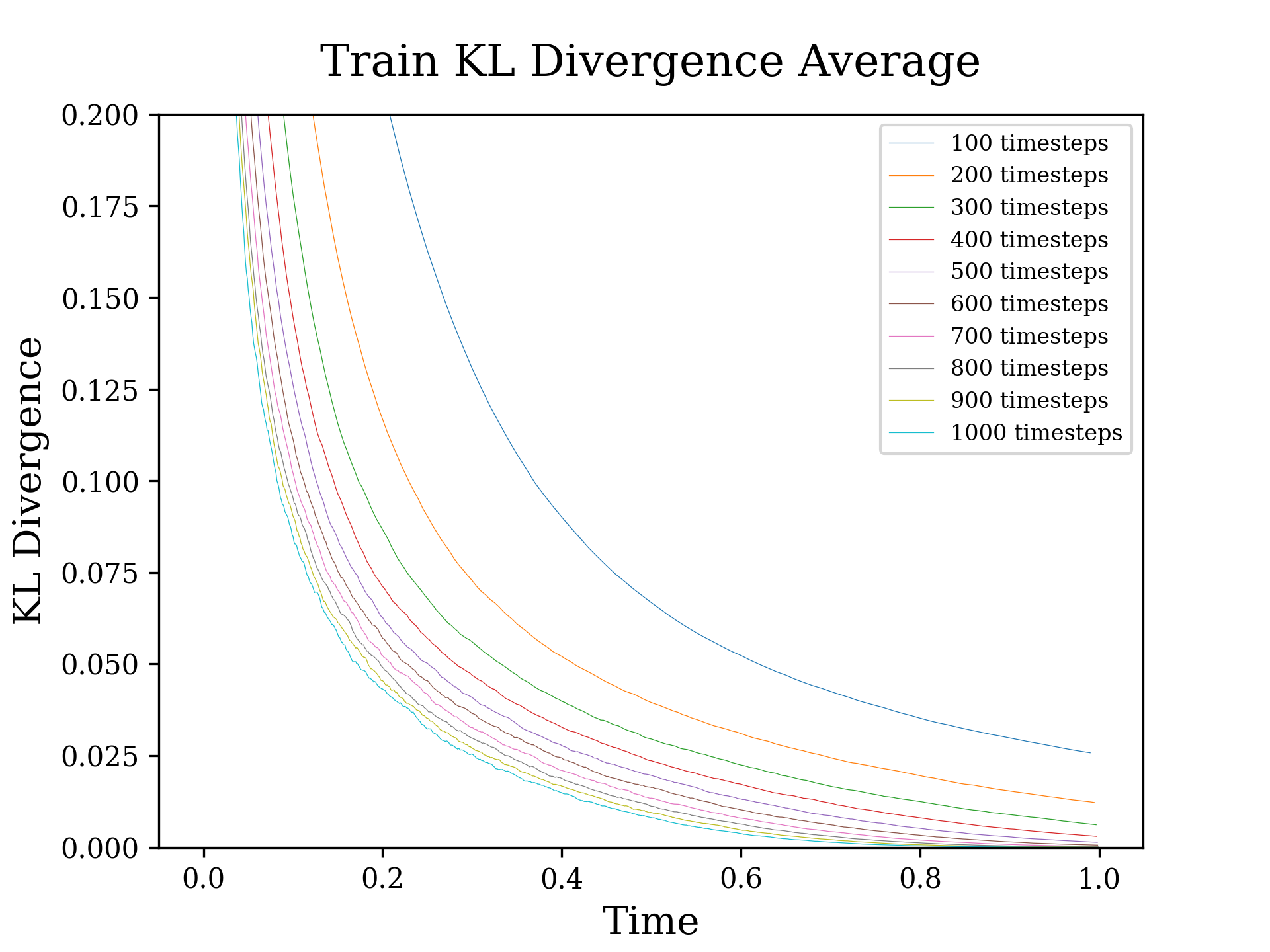}
    \end{subfigure}
    \begin{subfigure}[b]{0.45\textwidth}
        \includegraphics[width=\textwidth]{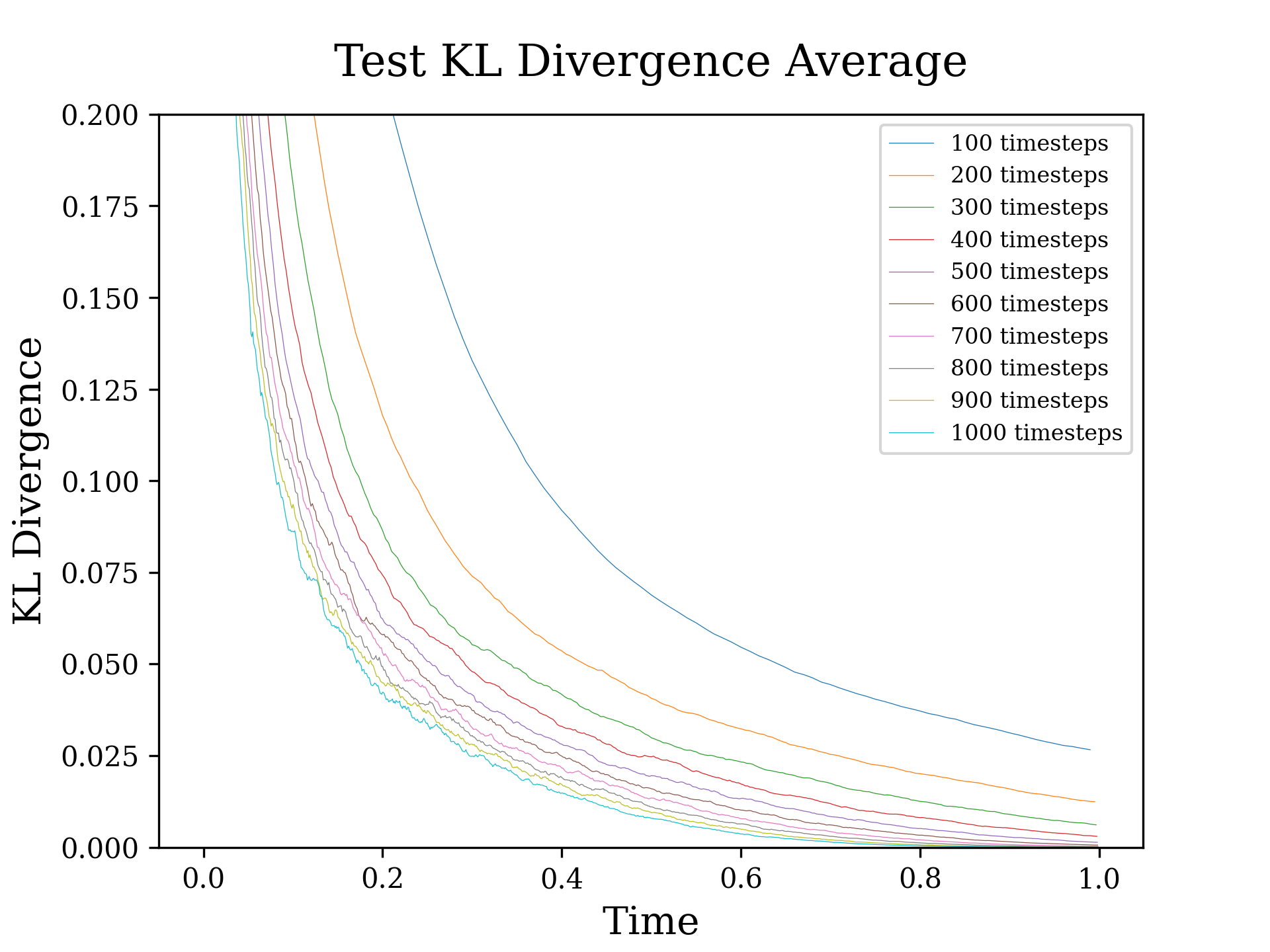}
    \end{subfigure}
    \begin{subfigure}[c]{\textwidth}
        \centering
        \includegraphics[width=.8\textwidth]{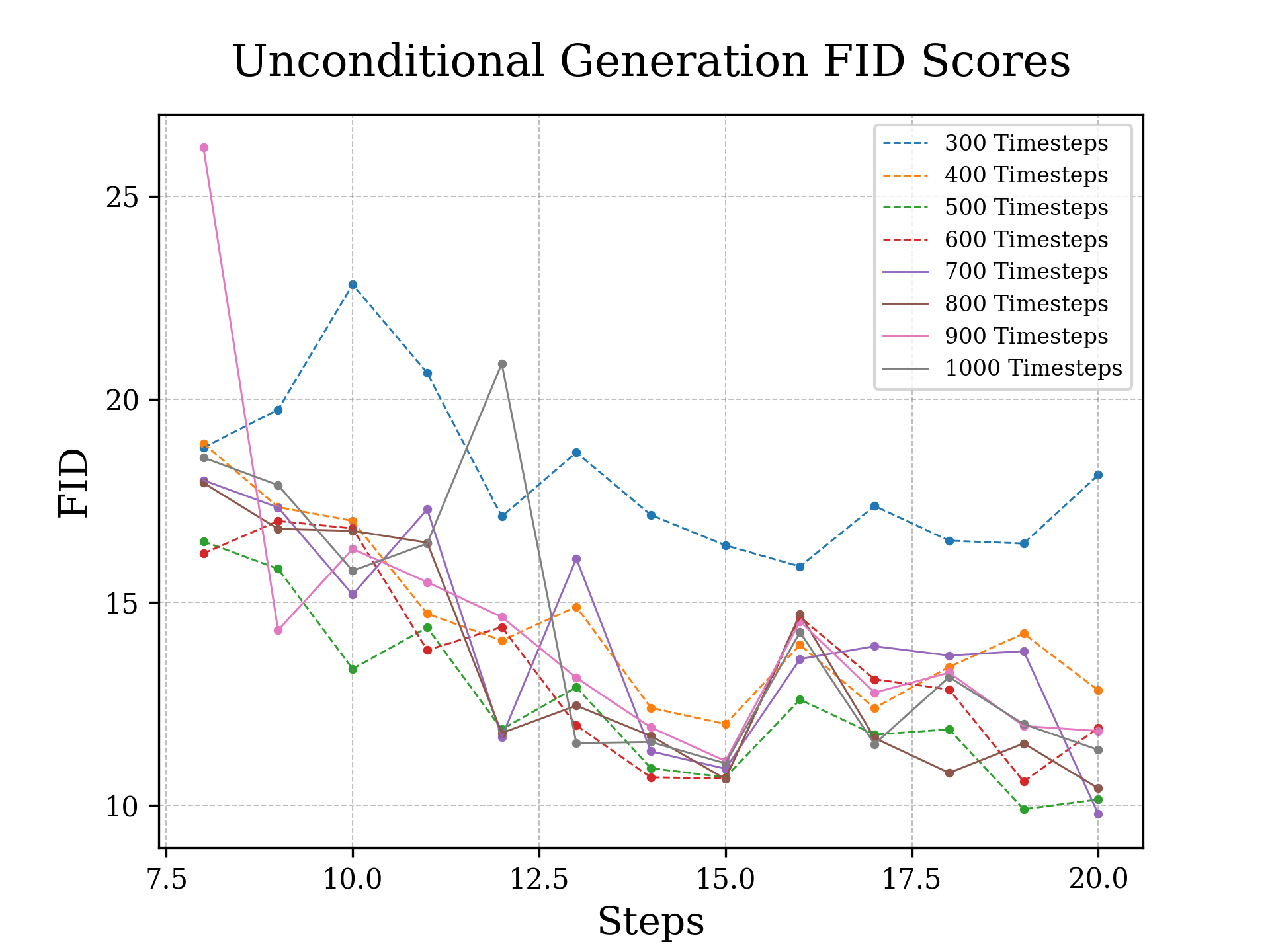}
    \end{subfigure}
    \caption{Further analysis with DDPMs. \textbf{Top}: The area under the loss curve when predicting the noise is always decreasing with more iterations in DDPMs. Furthermore, the loss curves are roughly monotonically decreasing with time indicating the denoising model converges to a "reasonable" minimum where the model is not better at denoising from a step with more noise compared to one with less noise. The timesteps are normalized to be between 0 and 1. \textbf{Bottom}: FID on CIFAR-10 improves with more time steps (in 10000s) at training. After around 500 timesteps, minimal improvement is observed, but FID is never worse. Also, the same amount of training is required to reach optimal FID values regardless of the number of timesteps. All models are UNets with the same exact architecture trained on CIFAR-10's training set with a batch size of 64 images. Convergence on FID was always observed within 150000 steps. Models trained with 100 and 200 timesteps are removed due to their relatively high FID scores, but they also exhibit convergence in FID.}
    \label{fig:diffusionKLD}
\end{figure}}
\newpage
\begin{figure}
\centering
    \includegraphics[height=1\textwidth]{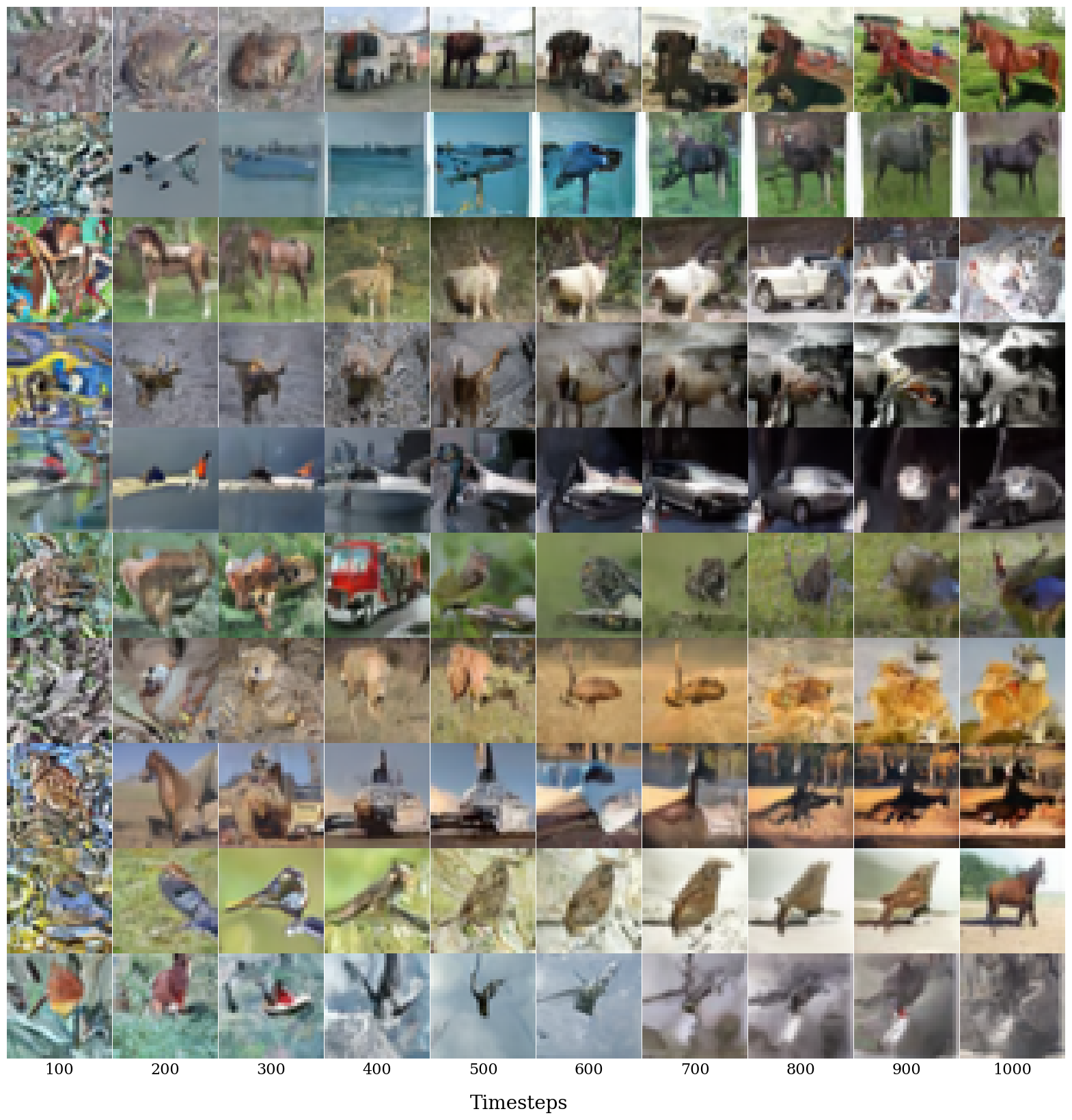}
    \caption{A grid of 10 unconditionally generated images from diffusion models. Each column represents a model trained with a certain number of iterations, and the images are denoised with the same number of iterations. Trained with few iterations, DDPMs are unable to create trajectories back to the data manifold. When trained with at least 400 iterations, the model is able to generate semantically meaningful images.}
\end{figure}
\newpage
\begin{figure}
    \centering
    \includegraphics[width=1\textwidth]{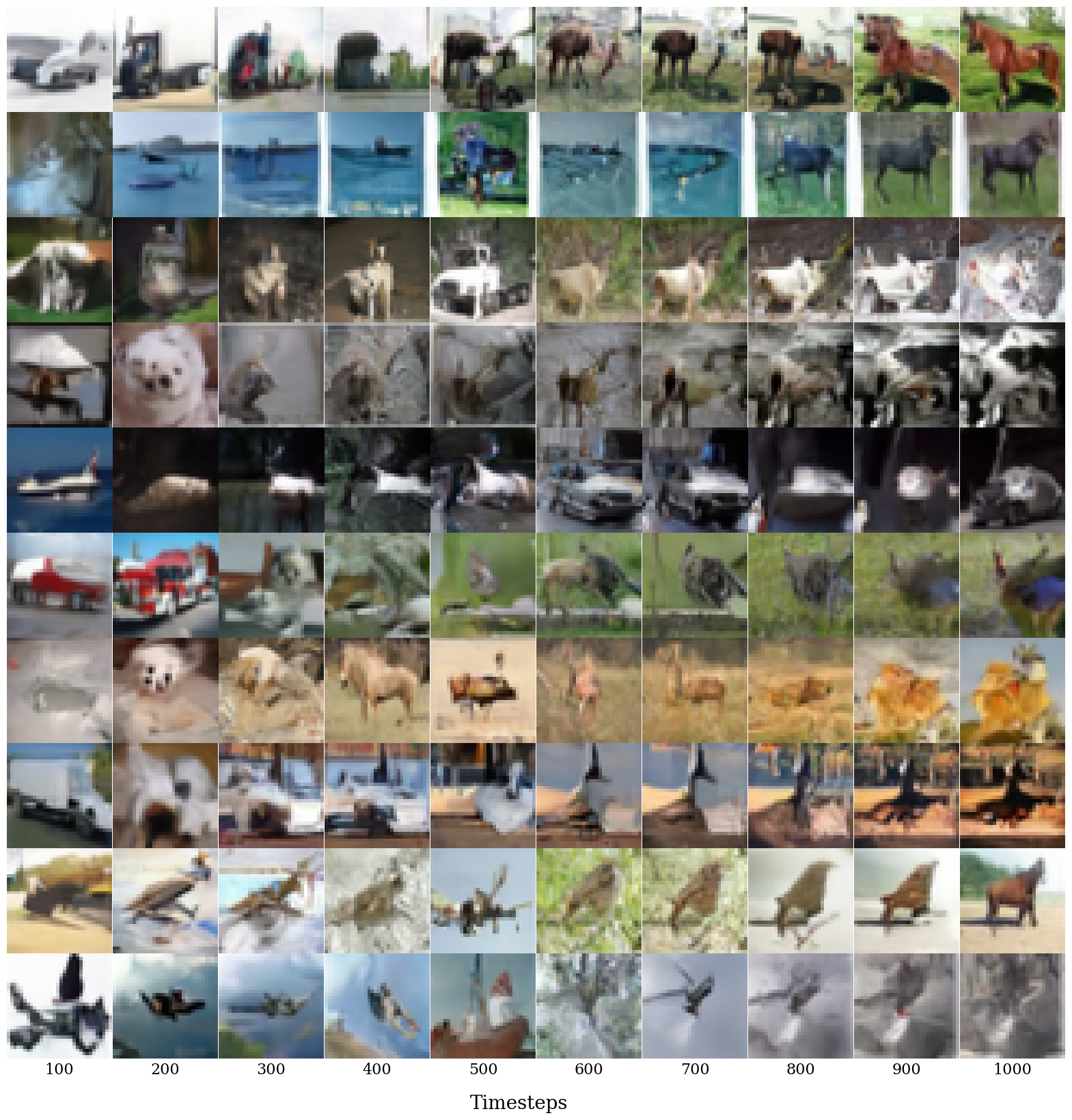}
    \caption{A grid of 10 images where each column represents the number of denoising steps used to return to the data manifold. The same diffusion model trained with 1000 iterations is used for all generations. Note that the same model can produce semantically different images for the same noise vector when the number of timesteps changes, but the reverse trajectories stabilize as the number of timesteps grow.}
\end{figure}

\end{document}